\newcolumntype{C}[1]{>{\centering\arraybackslash}m{#1}}
\theoremstyle{plain}
\newtheorem{theorem}{Theorem}[section]
\newtheorem{lemma}[theorem]{Lemma}
\theoremstyle{definition}
\theoremstyle{remark}
\DeclareMathOperator*{\argmin}{\arg\!\min}
\DeclareMathOperator*{\argmax}{\arg\!\max}
\newcommand{\abs}[1]{\left\lvert #1 \right\rvert}
\definecolor{lightpink}{rgb}{1,0.9,0.9}
\renewenvironment{quote}{%
  \list{}{%
    \leftmargin0.5cm   
    \rightmargin\leftmargin
  }
  \item\relax
}
{\endlist}
\newcommand{\cA}{\mathcal A}
\newcommand{\cK}{\mathcal K}
\newcommand{\cN}{\mathcal N}
\newcommand{\cP}{\mathcal P}
\newcommand{\cR}{\mathcal R}
\newcommand{\cS}{\mathcal S}
\newcommand{\bbR}{\mathbb R}
\newcommand{\bbF}{\mathbb F}
\newcommand{\trans}{\top}
\newcommand{\bc}{\mathbf c}
\newcommand{\bw}{\mathbf w}
\newcommand{\bP}{\mathbf P}
\newcommand{\bj}{\mathbf j}
\newcommand{\bp}{\mathbf p}
\newcommand{\bk}{\mathbf k}
\newcommand{\bbm}{\mathbf m}
\newcommand{\br}{\mathbf r}
\newcommand{\bM}{\mathbf M}
\newcommand{\bG}{\mathbf G}
\newcommand{\bI}{\mathbf I}
\newcommand{\bn}{\mathbf n}
\newcommand{\bU}{\mathbf U}
\newcommand{\bell}{\boldsymbol{\ell}}
\DeclarePairedDelimiterX{\inp}[2]{\langle}{\rangle}{#1, #2}
\newcommand{\norm}[1]{\left\lVert#1\right\rVert}
\newcommand{\SpecExp}{\textsc{SPEX}}
\icmltitlerunning{SpectralExplain}
\begin{document}

\twocolumn[
\icmltitle{\SpecExp: Scaling Feature Interaction Explanations for LLMs}



\icmlsetsymbol{equal}{*}

\begin{icmlauthorlist}
\icmlauthor{Justin Singh Kang}{equal,yyy}
\icmlauthor{Landon Butler}{equal,yyy}
\icmlauthor{Abhineet Agarwal}{equal,xxx}
\icmlauthor{Yigit Efe Erginbas}{yyy}
\icmlauthor{Ramtin Pedarsani}{zzz}
\icmlauthor{Kannan Ramchandran}{yyy}
\icmlauthor{Bin Yu}{yyy,xxx}
\end{icmlauthorlist}

\icmlaffiliation{yyy}{Department of Electrical Engineering and Computer Science, UC Berkeley}
\icmlaffiliation{xxx}{Department of Statistics, UC Berkeley}
\icmlaffiliation{zzz}{Department of Electrical and Computer Engineering, UC Santa Barbara}


\icmlcorrespondingauthor{Justin Singh Kang}{justin\_kang@berkeley.edu}

\icmlkeywords{Machine Learning, ICML}

\vskip 0.3in
]

\printAffiliationsAndNotice{\icmlEqualContribution} 

\begin{abstract}

Large language models (LLMs) have revolutionized machine learning due to their ability to capture complex interactions between input features. Popular post-hoc explanation methods like SHAP provide \textit{marginal} feature attributions, while their extensions to interaction importances only scale to small input lengths ($\approx 20$). We propose \emph{Spectral Explainer} (\SpecExp{}), a model-agnostic interaction attribution algorithm that efficiently scales to large input lengths ($\approx 1000)$. \SpecExp{} exploits underlying natural sparsity among interactions---common in real-world data---and applies a sparse Fourier transform using a channel decoding algorithm to efficiently identify important interactions.
We perform experiments across three difficult long-context datasets that require LLMs to utilize interactions between inputs to complete the task. For large inputs, \SpecExp{} outperforms marginal attribution methods by up to 20\% in terms of faithfully reconstructing LLM outputs. Further, \SpecExp{} successfully identifies key features and interactions that strongly influence model output. For one of our datasets, \textit{HotpotQA}, \SpecExp{} provides interactions that align with human annotations. Finally, we use our model-agnostic approach to generate explanations to demonstrate abstract reasoning in closed-source LLMs (\emph{GPT-4o mini}) and compositional reasoning in vision-language models.
\end{abstract}
\section{Introduction}
\label{sec:intro}
Large language models (LLMs) perform remarkably well across many domains by modeling complex interactions among features\footnote{LLM features refer to inputs at any granularity, e.g, tokens, sentences in a prompt or image patches in vision-language models.}. 
Interactions are critical for complex tasks like protein design, drug discovery, or medical diagnosis, which require examining combinations of hundreds of features. 
As LLMs are increasingly used in such high-stakes applications, they require trustworthy explanations to aid in responsible decision-making.
Moreover, LLM explanations enable debugging and can drive development through improved understanding \cite{zhang2023tell}. 
\begin{figure*}[t!]
    \centering
\includegraphics[width=\linewidth]{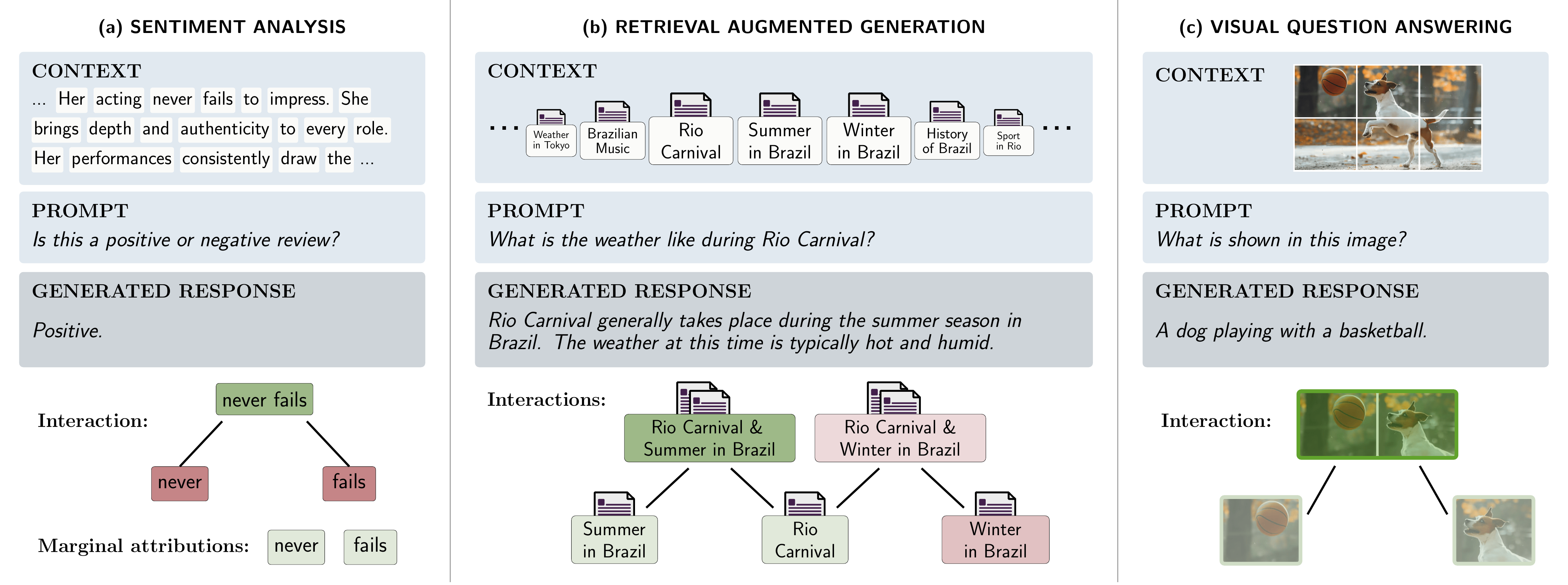}
    \caption{
    (a) Sentiment analysis: \SpecExp{} identifies the double negative ``never fails''. Marginal approaches assign positive attributions to ``never'' and ``fails''. 
    (b) Retrieval augmented generation: \SpecExp{} explains the output of a RAG pipeline, finding a \textit{combination} of documents the LLM used to answer the question and ignoring unimportant information.
    (c) Visual question answering: \SpecExp{} identifies interaction between image patches required to correctly summarize the image.} 
        \label{fig:cc-intro}
\end{figure*}

Current post-hoc explainability approaches for LLMs fall into two categories: (i) methods like Shapley values \cite{Lundberg2017} and LIME \cite{ribeiro2016should} compute \textit{marginal} feature attribution but do not consider interactions.
As a running example, 
consider a sentiment analysis task (see Fig.~\ref{fig:cc-intro}(a)) where the LLM classifies a review containing the sentence ``Her acting never fails to impress''.
Marginal attribution methods miss this interaction, and instead attribute positive sentiment to ``never'' and ``fails'' (see Fig.~\ref{fig:cc-intro}(a)).
(ii) \textit{Interaction indices} such as Faith-Shap \cite{tsai2023faith} attribute interactions up to a given order $d$.
That is, for $n$ input features, they compute attributions by considering all $O(n^d)$ interactions.
This becomes infeasible for small $n$ and $d$. This motivates the central question of this paper: 
\begin{center}
{\textit{Can we perform interaction attribution at scale for a large input space $n$ with reasonable computational complexity? }}
\end{center}
We answer this question affirmatively with \emph{Spectral Explainer} (\SpecExp{}) by leveraging information-theoretic tools to efficiently identify important interactions at LLM scale. 
%
The scale of \SpecExp{}  is enabled by the observation that LLM outputs are often driven by a small number of \emph{sparse} interactions between inputs \cite{tsui2024on, ren2024where}. 
See Fig.~\ref{fig:cc-intro} for examples of sparsity in various tasks. 
%
\SpecExp{} discovers important interactions by using a sparse Fourier transform to construct a surrogate explanation function.
This sparse Fourier transform searches for interactions via a channel decoding algorithm, thereby avoiding the exhaustive search used in existing approaches.

Our experiments show we can identify a small set of interactions that effectively and concisely reconstruct LLM outputs with $n \approx 1000$.  
This scale is far beyond what current interaction attribution benchmarks consider, e.g., SHAP-IQ \cite{muschalik2024shapiq} only considers datasets with no more than 20 features. 
This is summarized in Fig~\ref{fig:phase-feasible-diagram}; marginal attribution methods scale to large $n$ but ignore crucial interactions.
On the other hand, existing interaction indices do not scale with $n$. \SpecExp{}  both  captures interactions \emph{and} scales to large $n$. For an $s$ sparse Fourier transform containing interactions of at most degree $d$, \SpecExp{} has computational complexity at most $\Tilde{O}(sdn)$. In contrast, popular interaction attribution approaches scale as $\Omega(n^d)$.

\textbf{Evaluation Overview.} We compare \SpecExp{} to popular feature and interaction indices across three standard datasets. Algorithms and experiments are made publicly available\footnote{\url{https://github.com/basics-lab/spectral-explain}}.
\begin{itemize}[topsep=0pt, itemsep=0pt,leftmargin=6pt]
    \item \textbf{Faithfulness.}  \SpecExp{}  more faithfully reconstructs ($\approx$ 20\% improvement) outputs of LLMs as compared to other methods across datasets. Moreover, it learns more faithful reconstructions with fewer model inferences.
    \item \textbf{Identifying Interactions.} \SpecExp{}  identifies a small number of influential interactions that significantly change model output. For one of our datasets, \textit{HotpotQA}, \SpecExp{} provides interactions that align with human annotations.
    \item \textbf{Case Studies.} We demonstrate how one might use \SpecExp{} to identify and debug reasoning errors made by closed-source LLMs (\emph{GPT-4o mini}) and for compositional reasoning in a large multi-modal model.
\end{itemize}

\begin{figure}[h]
    \centering
    \includegraphics[width=0.8\linewidth]{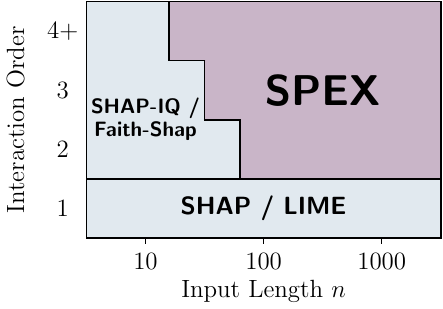}\vspace{-8pt}
    \caption{Marginal attribution approaches scale to large $n$, but do not capture interactions. Interaction indices only work for small $n$. \SpecExp{}  computes interactions \emph{and} scales.}
    \label{fig:phase-feasible-diagram}
    \vspace{-10pt}
\end{figure}


\begin{figure*}
    \centering
    \includegraphics[width=\linewidth]{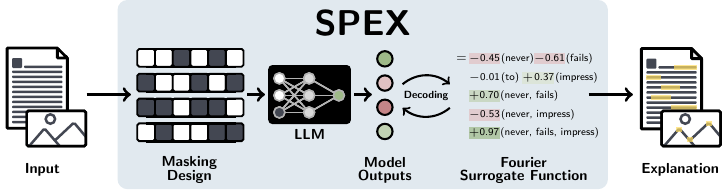}
    \caption{\SpecExp{} utilizes channel codes to determine masking patterns.  We observe the changes in model output depending on the used mask. \SpecExp{} uses message passing to learn a surrogate function to generate interaction-based explanations.}
    \vspace{-5pt}
    \label{fig:algorithm-block}
\end{figure*}

\section{Related Work}
\label{sec:related_work}

LLMs are capable of generating rationalizations for their outputs, but such rationalizations are another form of model output, susceptible to the same limitations \cite{sarkar2024large}.
In contrast, this work focuses on explanations in the form of feature attributions that are \emph{grounded} in the model's inputs and outputs, and can be applied to any ML model. i.e., model-agnostic methods. 
Moreover, model-agnostic methods can be applied to LLMs incapable of explaining their own output such as protein language models as well as encoder-only models (see experiments in Sec.~\ \ref{sec:language})

\paragraph{Model-Agnostic Feature Attributions} LIME \cite{ribeiro2016should}, SHAP \cite{Lundberg2017}, and Banzhaf values \cite{wang2023} are popular model-agnostic feature attribution approaches.
SHAP and Banzhaf use game-theoretic tools for feature attribution, while LIME fits a sparse linear model.
\citet{chen2018learning} utilize tools from information theory for feature attributions. 
Other methods 
\cite{sundararajan2017axiomatic,binder2016layerwiserelevancepropagationneural} instead utilize internal model structure to derive feature attributions.

\vspace{-2pt}
\paragraph{Interaction Indices} \citet{tsai2023faith} and \citet{dhamdhere2019shapley} extend Shapley values to consider interactions. 
%
\citet{fumagalli2023shapiq} provide a general framework towards interaction attribution but can only scale to at most $n \approx 20$ input features. 
\citet{ren2023can, ren2024towards} theoretically study sparse interactions, a widely observed phenomenon in practice. 
%
\citet{kang2024learning} show that sparsity under the M\"obius transform \cite{harsanyi1958bargaining} can be theoretically exploited for efficient interaction attribution. 
In practice, the proposed algorithm fails due to noise being amplified by the non-orthogonality of the M\"obius basis.
Our work utilizes the \emph{orthonormal} Fourier transform, which improves robustness by preventing noise amplification. 
\citet{hsu2024efficientautomatedcircuitdiscovery} apply tools from 
mechanistic interpretability such as circuit discovery for interaction attribution.

\vspace{-8pt}
\paragraph{Feature Attribution in LLMs}
\citet{enouen2023textgenshap, paes2024multi} propose hierarchical feature attribution for language models that first groups features (paragraphs) and then increase the feature space via a more fine-grained analysis (sentences or words/tokens). 
\citet{cohenwang2024contextciteattributingmodelgeneration} provide marginal feature importances via LASSO. 
These works do not explicitly compute interaction attributions.

\section{Overview: Fourier Transform Formulation}
\label{sec:fourier_background}
We review background on the Fourier transform for \SpecExp{}. 

\vspace{-7pt}
\paragraph{Model Input} Let $\mathbf{x}$ be the input to the LLM where $\mathbf{x}$ consists of $n$ input features, e.g., words. 
For $\mathbf{x} = $ ``Her acting never fails to impress'', $n = 6$. 
In Fig.~\ref{fig:cc-intro}(b) and (c), $n$ refers to the number of documents or image patches. 
For $S \subseteq [n]$, we define $\mathbf{x}_{S}$ as a masked input where $S$ denotes the coordinates in $\mathbf{x}$ we replace with the $\texttt{[MASK]}$ token. 
For example, if $S = \{3\}$, then the masked input $\mathbf{x}_{S}$ is ``Her acting \texttt{[MASK]} fails to impress''. 
Masks can be more generally applied to any input. 
In Fig.~\ref{fig:cc-intro}(b) and (c), masks are applied over documents and image patches respectively. 

\vspace{-7pt}
\paragraph{Value Function} For input $\mathbf{x}$, let $f(\mathbf{x}_S) \in \mathbb{R}$ be the output of the LLM under masking pattern
$S$. 
In sentiment analysis, (see Fig.~\ref{fig:cc-intro}(a)) $f(\mathbf{x}_{S})$ is the logit of the positive class. 
If $ \mathbf{x}_{S}$ is ``Her acting \texttt{[MASK]} fails to impress", this masking pattern changes the score from positive to negative.  
For text generation tasks, we use the well-established practice of \textit{scalarizing} generated text using the negative log-perplexity\footnote{Other approaches to scalarization exist: text embedding similarity, BERT score, etc. See \cite{paes2024multi} for an overview.} of generating the \textbf{original} output for the unmasked input $\mathbf{x}$ \cite{paes2024multi,cohenwang2024contextciteattributingmodelgeneration}.
%
Since we only consider sample-specific explanations for a given $\mathbf{x}$, we suppress dependence on $\mathbf{x}$ and write $f(\mathbf{x}_S)$ as $f(S)$. 

\vspace{-7pt}
\paragraph{Fourier Transform of Value Function} Let $\bbF_2^n = \{0,1\}^n$, and addition between two elements in $\bbF_2$ as XOR. Since there are $2^n$ possible masks $S$, we equivalently write $f : \bbF_2^n \rightarrow \bbR$, where $f(S) = f(\bbm)$ with $S = \{ i : m_i = 1\}$. 
That is, $\bbm \in \bbF_2^n$ is a binary vector representing a \emph{masking pattern}. 
If $m_i = 0$ we evaluate the model after masking the $i^{\text{th}}$ input. 
The Fourier transform $F : \bbF_2^n \rightarrow \bbR$ of $f$ is:
\begin{equation}\label{eq:transform}
    f(\bbm)  = \sum_{\bk \in \bbF_2^n} (-1)^{\inp{\bbm}{\bk}} F(\bk).
\end{equation}
The Fourier transform is an \emph{orthonormal} transform onto a parity (XOR) function basis \cite{odonnell2014analysis}.

\paragraph{Sparsity} 
$f$ is \emph{sparse} if $F(\bk) \approx 0$ for most of the $\bk \in \bbF_2^n$. Moreover, we call $f$ \emph{low degree}, if large $F(\bk)$ have small $\abs{\bk}$.
\citet{ren2024towards, kang2024learning, valle2018deep, yang2019fine} and experiments in Appendix~\ref{apdx:experiments} establish that deep-learning based value functions $f$ are sparse and low degree. 
See Fig.~\ref{fig:cc-intro} for examples. 

\vspace{-7pt}
\section{Problem Statement} 

Our goal is to compute an \textit{approximate surrogate} $\hat{f}$. 
\SpecExp{}  finds a small set of $\bk$ with $\abs{\bk} \ll n$ denoted $\cK$, and $\hat{F}(\bk)$ for each $\bk \in \cK$ such that
\begin{equation}\label{eq:surrogate}
    \hat{f}(\bbm) = \sum_{\bk \in \cK} (-1)^{\inp{\bbm}{\bk}} \hat{F}(\bk). 
\end{equation}
This goal is motivated by the Fourier sparsity that commonly occurs in real-world data and models.
Some current interaction indices \cite{tsai2023faith} determine $\cK$ by formulating it as a LASSO problem, and solving it via $\ell_1$-penalized regression \cite{tibshirani1996regression},
\begin{equation}
    \hat{F} = \argmin_{\hat{F}} \sum_{\bbm} \abs{f(\bbm) - \hat{f}(\bbm)}^2 + \lambda \norm{\hat{F}}_1.
\end{equation}
For given order $d$, this approach requires enumeration of all $O(n^{d})$ interactions. 
This leads to an explosion in computational complexity as $n$ grows, as confirmed by our experiments (see Fig~\ref{fig:faith}(a)). 
To resolve this problem, we need to find an efficient way to search the space of interactions.


\vspace{-5pt}
\paragraph{Ideas behind \SpecExp{}} The key to efficient search is realizing that we are not solving an \emph{arbitrary} regression problem: (i) the Fourier transform \eqref{eq:transform} imparts algebraic structure and (ii) we can design the masking patterns $\bbm$ with sparsity and that structure in mind. 
\SpecExp{} exploits this by embedding a BCH Code \cite{Lin1999}, a widely used algebraic channel code, into the masking patterns.
%
In doing so, we map the problem of searching the space of interactions onto the problem of decoding a message (the important $\bk$) from a noisy channel.
We decode via the Berlekamp-Massey algorithm \cite{massey1969shift}, a well-established algebraic algorithm for decoding BCH codes.

\section{\SpecExp: Algorithm Overview}
\label{sec:method}

We now provide a brief overview of \SpecExp{} (see Fig.~\ref{fig:algorithm-block}). A complete overview is provided in Appendix~\ref{apdx:algorithm}. The high-level description consists of three parts:
\begin{enumerate}[label={}, topsep=0pt, itemsep=0pt, leftmargin=*]
    \item {\bfseries Step 1}: Determine a minimal set of masking patterns $\bbm$ to use for model inference, and query $f(\bbm)$ for each $\bbm$.
    \item {\bfseries Step 2}: Efficiently learn the surrogate function $\hat{f}$ from the set of collected samples $f(\bbm)$. 
    \item {\bfseries Step 3}: Use $\hat{f}$ and its transform $\hat{F}$ to identify important interactions for attribution.
\end{enumerate}

\subsection{Masking Pattern Design: Exploiting Structure}
We first highlight two important properties of Fourier transform related to masking design structure.

\textit{Aliasing (Coefficient Collapse) Property}: For $b \leq n$ and $\bM \in\bbF_2^{b \times n}$, let $u : \bbF_2^b \rightarrow \bbR$ denote a subsampled version of $f$. Then $u$ has Fourier transform $U$:
\begin{equation}\label{eq:alais_gen}
    u(\bell) = f(\bM^\trans \bell) \iff U(\bj) = \sum_{\bM \bk = \bj} F(\bk).
\end{equation}
\textit{Shift Property}: For any function $f: \bbF_2^n \rightarrow \bbR$, if we shift the input by some vector $\bp \in \bbF_2^n$, the Fourier transform changes as follows:
\begin{equation}\label{eq:shift}
    f_{\bp}(\bbm) = f(\bbm + \bp) \iff F_{\bp}(\bk) = (-1)^{\inp{\bp}{\bk}} F(\bk).
\end{equation}
\textbf{Designing Aliasing } The aliasing property \eqref{eq:alais_gen} dictates that when sampling according to $\bM \in \bbF_2^{b \times n}$, all $F(\bk)$ with image $\bj = \bM \bk$ are added together.
If only \emph{one} dominant $F(\bk)$ satisfies $\bM \bk = \bj$, which can happen due to sparsity, we call it a \emph{singleton}.
We want $\bM$ to maximize the number of singletons, since we ultimately use singletons to recover the dominant coefficients and estimate $\hat{F}$. 
\SpecExp{} uses $\bM$ with elements chosen uniformly from $\bbF_2$. Such $\bM$ has favorable properties regarding generating singletons.

\textbf{Designing Shifts } Once we create singletons, we need to identify them, extract the dominant index $\bk$, and estimate $\hat{F}(\bk)$.
The shift property \eqref{eq:shift} is critical for this task since the sign of the dominant $F(\bk)$ changes depending on $\inp{\bp}{\bk}$. Thus, each time we apply a shift vector, we gather (potentially noisy) information about the dominant $\bk$. 
Finding $\bk$ and estimating $\hat{F}(\bk)$ can be modeled as communicating information over a noisy channel \cite{Shannon1948}, where the communication protocol is controlled by the shift vectors. We use the aforementioned BCH channel code, which requires only $\approx t \log(n)$ shifts to recover $\bk$. The parameter $t$ controls the robustness of the decoding procedure. Generally, if the maximum degree is $\abs{\bk} = d$, we choose $t \geq d$. If $t - \abs{\bk} > 0$, we use the additional shifts to improve the estimation of $\hat{F}(\bk)$. Since \emph{most} of the time $\abs{\bk}$ is less than $5$, we fix $t=5$ for experiments in this paper.   

\textbf{Combined Masking } Combining ideas from above, we construct $C = 3$ independently sampled $\bM_c$ and $p$ shifting vectors $\bp_i$, which come from rows of a BCH parity matrix.  Then, for $c \in [C]$ and $i \in [p]$, we entirely sample the function $u_{c,i}(\bell) = f(\bM_c^{\trans} \bell + \bp_i)$. The total number of samples is $\approx C2^bt\log(n)$. We note that all model inference informed by our masking pattern can be conducted in parallel.
The Fourier transform of each $u_{c,i}$, denoted $U_{c,i}$, is connected to the transform of the original function via
\begin{equation} \label{eq:alais}
    U_{c,i}(\bj) = \sum_{\bk\; : \;\bM_c \bk = \bj} (-1)^{\inp{\bp_i}{\bk}} F(\bk).
\end{equation}


\subsection{Computing the Surrogate Function}
Once we have the samples, we use an iterative message passing algorithm to estimate $\hat{F}(\bk)$ for a small (a-priori unknown) set of $\bk \in \cK$. 

\vspace{-7pt}
\paragraph{Bipartite Graph} We construct a bipartite graph depicted in Fig.~\ref{fig:bipartite}. 
The observations $ \bU_c(\bj) = (U_{c,0}(\bj), \dotsc, U_{c,p}(\bj))$ are factor nodes, while the values $\hat{F}(\bk)$ correspond to variable nodes.
$\hat{F}(\bk)$ is connected to $\bU_c(\bj)$ if $\bM_c \bk = \bj$.
\begin{figure*}[t!]
\centering
\begin{center} 
  \includegraphics[width=.6\textwidth]{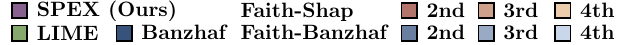}
\end{center}

\begin{subfigure}[t]{\textwidth}
\centering
 \includegraphics[width=.91\textwidth]{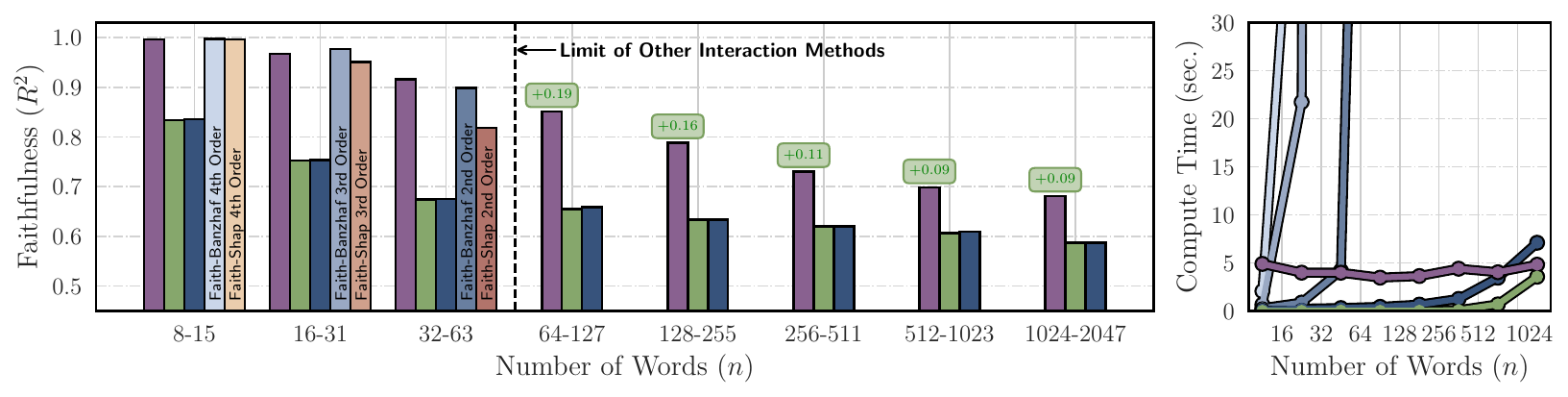}
 \vspace{-.2cm}
 \caption{\emph{Sentiment}}
     \label{fig:faith_sentiment}
\end{subfigure}
\begin{subfigure}[b]{.48\textwidth}
  \centering
  \includegraphics[width=0.9\linewidth]{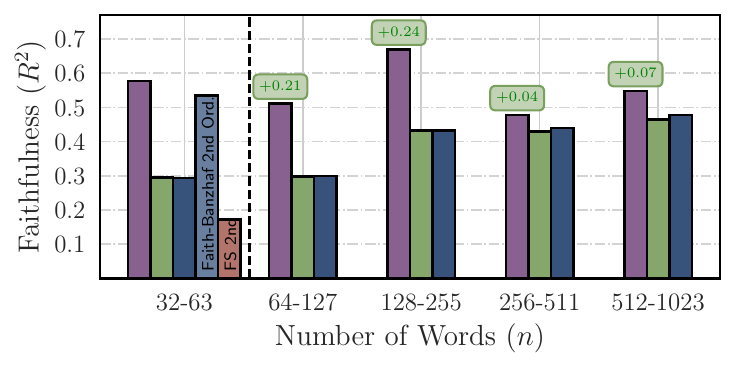}
   \vspace{-.2cm}
  \caption{\emph{DROP}}
  \label{fig:faith_drop}
\end{subfigure}%
\begin{subfigure}[b]{.48\textwidth}
  \centering
  \includegraphics[width=0.9\linewidth]{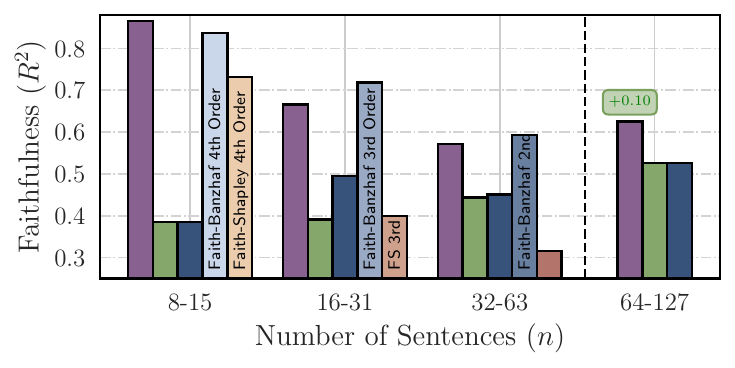}
   \vspace{-.2cm}
  \caption{\emph{HotpotQA}}
  \label{fig:faith_hotpot}
\end{subfigure}
\vspace{-4pt}
\caption{(a) \SpecExp{} uniformly outperforms all baselines in terms of faithfulness. High order Faith-Banzhaf indices have competitive faithfulness, but rapidly increase in computational cost. (b) The DROP dataset contains only larger examples, so we primarily compare against first order methods.  (c) Our approach remains competitive in this task as well,  and still outperforms marginal approaches for large $n$. 
}
\label{fig:faith}
\vspace{-12pt}
\end{figure*}

\vspace{-7pt}
\paragraph{Message Passing} The messages from factor to variable are computed by attempting to decode a singleton via the Berlekamp-Massey algorithm. If a $\bk$ is successfully decoded, $\bk$ is added to $\cK$ and $F(\bk)$ is estimated and sent to factor node $\hat{F}(\bk)$. 
The variable nodes send back the average of their received messages to all connected factor nodes. The factor nodes then update their estimates of $\hat{F}$, and attempt decoding again.
The process repeats until convergence. Once complete the surrogate function is constructed from $\cK$ and $\hat{F}(\bk)$ according to \eqref{eq:surrogate}. 
Complete step-by-step details are in Appendix~\ref{apdx:algorithm}, Algorithm~\ref{alg:message-pass}.

\begin{figure}[h]
    \centering
    \includegraphics[width=0.92\linewidth]{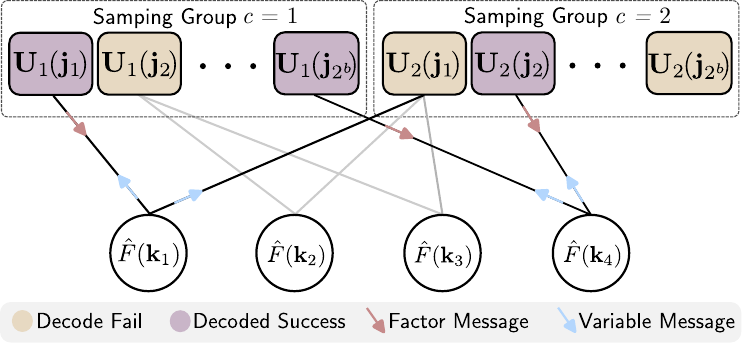}
    \caption{Depiction of the message passing algorithm for computing the surrogate function in \SpecExp{}.}
    \label{fig:bipartite}
\end{figure}

\begin{figure*}[t]
\centering
\begin{center} 
  \includegraphics[width=.63\textwidth]{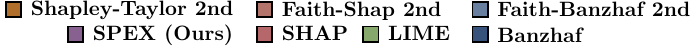}
\end{center}
\begin{subfigure}[b]{.3\textwidth}
  \centering
  \includegraphics[width=\linewidth]{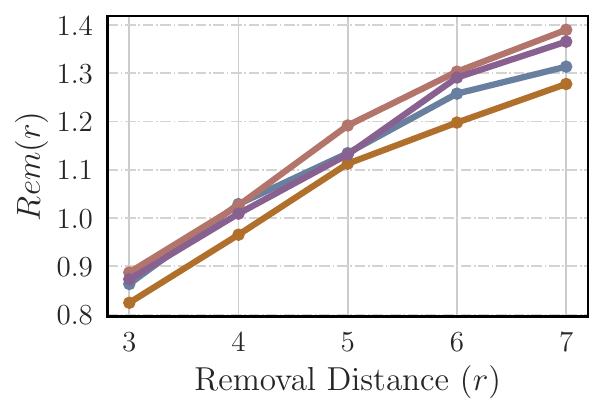}
  \caption{\emph{Sentiment} $n\in [32,63]$}
  \label{fig:fidelity_sentiment}
\end{subfigure}%
\begin{subfigure}[b]{.3\textwidth}
  \centering
  \includegraphics[width=\linewidth]{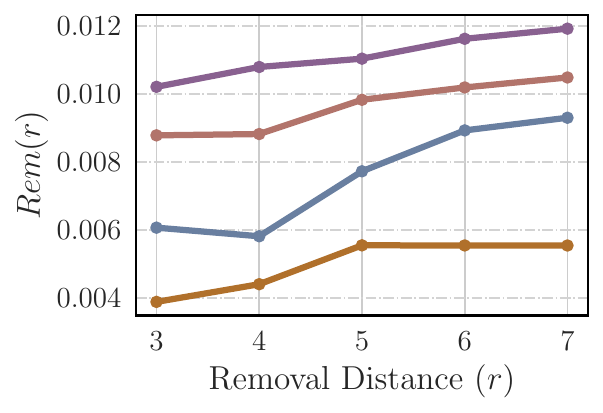}
  \caption{\emph{DROP} $n\in [32,63]$}
  \label{fig:fidelity_drop}
\end{subfigure}%
\begin{subfigure}[b]{.3\textwidth}
  \centering
  \includegraphics[width=\linewidth]{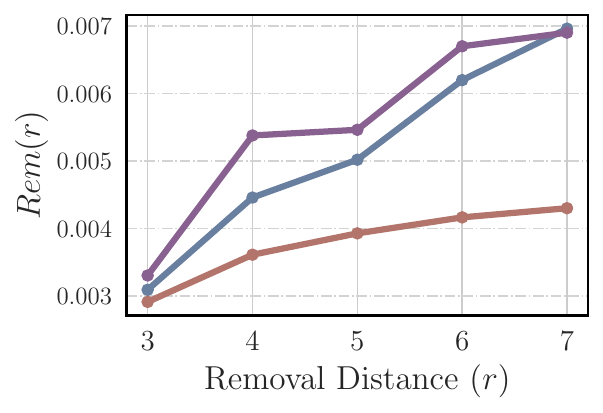}
  \caption{\emph{HotpotQA} $n\in [32,63]$}
  \label{fig:fidelity_hotpot}
\end{subfigure}
\begin{subfigure}[b]{.3\textwidth}
  \centering
  \includegraphics[width=\linewidth]{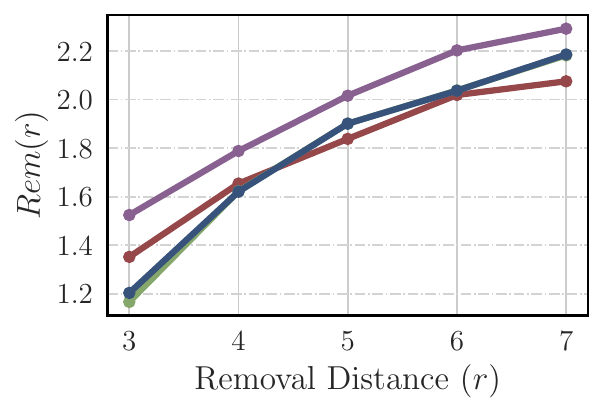}
  \caption{\emph{Sentiment} $n\in [64,127]$}
  \label{fig:fidelity_sentiment2}
\end{subfigure}%
\begin{subfigure}[b]{.3\textwidth}
  \centering
  \includegraphics[width=\linewidth]{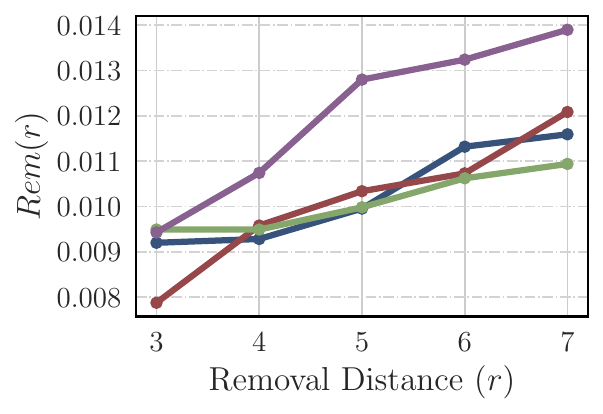}
  \caption{\emph{DROP} $n\in [64,127]$}
  \label{fig:fidelity_drop2}
\end{subfigure}%
\begin{subfigure}[b]{.3\textwidth}
  \centering
  \includegraphics[width=\linewidth]{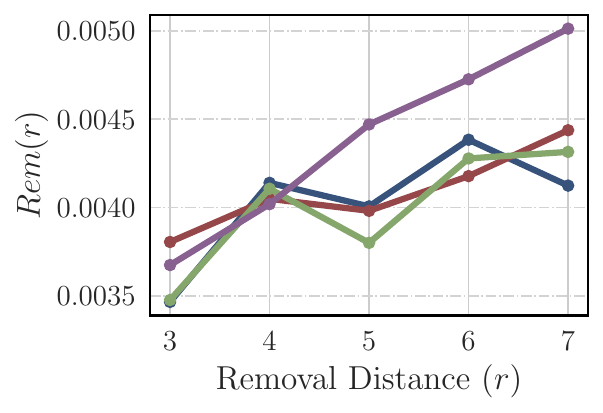}
  \caption{\emph{HotpotQA} $n\in [64,127]$}
  \label{fig:fidelity_hotpot2}
\end{subfigure}
\vspace{-5pt}
\caption{On the removal task, \SpecExp{} performs competitively with 2\textsuperscript{nd} order methods on the \emph{Sentiment} dataset, and out-performs all approaches on \emph{DROP} and  \emph{HotpotQA} dataset for $n \in [32,63]$. When $n$ is too large to compute other interaction indices, we outperform marginal methods.}
\label{fig:fidelity}
\vspace{-12pt}
\end{figure*}

\vspace{-14pt}
\section{Experiments}
\label{sec:language}

\paragraph{Datasets} 
We use three popular datasets that require the LLM to understand interactions between features. 
\begin{enumerate}[ topsep=0pt, itemsep=0pt, leftmargin=*]
\item \emph{Sentiment} is primarily composed of the \emph{Large Movie Review Dataset} \cite{maas-EtAl:2011:ACL-HLT2011}, which contains both positive and negative IMDb movie reviews. The dataset is augmented with examples from the \emph{SST} dataset \cite{ socher2013recursive} to ensure coverage for small $n$. We treat the words of the reviews as the input features.
\item{\emph{HotpotQA} \cite{yang2018hotpotqa} is a question-answering dataset requiring multi-hop reasoning over multiple Wikipedia articles to answer complex questions. We use the sentences of the articles as the input features.}
\item{\emph{Discrete Reasoning Over Paragraphs} (DROP)} \cite{dua2019drop} is a comprehension benchmark requiring discrete reasoning operations like addition, counting, and sorting over paragraph-level content to answer questions. We use the words of the paragraphs as the input features. 
\end{enumerate}
%
%
\vspace{-7pt}
\paragraph{Models} For \textit{DROP} and \textit{HotpotQA}, (generative question-answering tasks) we use \texttt{Llama-3.2-3B-Instruct} \cite{grattafiori2024llama3herdmodels} with $8$-bit quantization. For \emph{Sentiment} (classification), we use the encoder-only fine-tuned \texttt{DistilBERT} model \cite{Sanh2019DistilBERTAD,sentimentBert}.

\vspace{-7pt}
\paragraph{Baselines} We compare against popular marginal metrics LIME, SHAP, and the Banzhaf value. 
For interaction indices, we consider Faith-Shapley, Faith-Banzhaf, and the Shapley-Taylor Index. We compute all benchmarks where computationally feasible. That is, we always compute marginal attributions and interaction indices when $n$ is sufficiently small. In figures, we only show the best performing baselines. Results and implementation details for all baselines can be found in 
Appendix~\ref{apdx:experiments}.

\vspace{-6pt}
\paragraph{Hyperparameters} \SpecExp{} has several parameters to determine the number of model inferences (masks). We choose $C=3$, informed by \citet{li2015spright} under a simplified sparse Fourier setting. We fix $t = 5$, which is the error correction capability of \SpecExp{} and serves as an approximate bound on the maximum degree. 
We also set $b=8$; the total collected samples are $\approx C2^bt \log(n)$. 
For $\ell_1$ regression-based interaction indices, we choose the regularization parameter via $5$-fold cross-validation.

\vspace{-3pt}
\subsection{Metrics}

We compare \SpecExp{} to other methods across a variety of well-established metrics to assess performance.

\textbf{Faithfulness}: To characterize how well the surrogate function $\hat{f}$ approximates the true function, we define \emph{faithfulness} \cite{zhang2023trade}:
\vspace{-3pt}
\begin{equation}
    R^2 = 1 -  \frac{\lVert \hat{f} - f \rVert^2}{\left\lVert f - \bar{f} \right\rVert^2},
\end{equation}
where $\left\lVert f  \right\rVert^2 = \sum_{\bbm \in \bbF_2^n}f(\bbm)^2$ and $\bar{f} = \frac{1}{2^n} \sum_{\bbm \in \bbF_2^n}f(\bbm)$.

Faithfulness measures the ability of different explanation methods to predict model output when masking random inputs. 
We measure faithfulness over 10,000 random \emph{test} masks per-sample, and report average $R^2$ across samples. 

\textbf{Top-$r$ Removal}: We measure the ability of methods to identify the top $r$ influential features to model output:
\vspace{-2pt}
\begin{align}
\begin{split}
    \mathrm{Rem}(r) = \frac{|f(\boldsymbol{1}) - f(\bbm^*)|}{|f(\boldsymbol{1})|}, \\
    \;\bbm^* = \argmax \limits_{\abs{\bbm} = n-r}|\hat{f}(\boldsymbol{1}) - \hat{f}(\bbm)|.
\end{split}
\end{align}
\vspace{-8pt}

\textbf{Recovery Rate@$r$:} 
Each question in \emph{HotpotQA} contains human-labeled annotations for the sentences required to correctly answer the question. 
We measure the ability of interaction indices to recover these human-labeled annotations. 
Let $S_{r^*} \subseteq [n]$ denote human-annotated sentence indices. 
Let $S_{i}$ denote feature indices of the $i^{\text{th}}$ most important interaction for a given interaction index.
Define the recovery ability at $r$ for each method as follows
\vspace{-2pt}
\begin{equation}
\label{eq:recovery_k}
    \text{Recovery@}r = 
    \frac{1}{r}\sum^r_{i=1}\frac{\abs{S_r^*\cap S_i}}{|S_{i}|}.
\end{equation}
\vspace{-8pt}

Intuitively, \eqref{eq:recovery_k} measures how well interaction indices capture features that align with human-labels.

\begin{figure*}[t]
\centering
\hfill
\begin{subfigure}[b]{.5\textwidth}
  \centering
    \hspace{0.82cm}\includegraphics[width=0.75\textwidth]{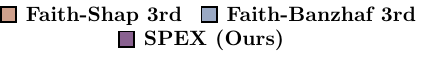}
  \includegraphics[width=.9\linewidth]{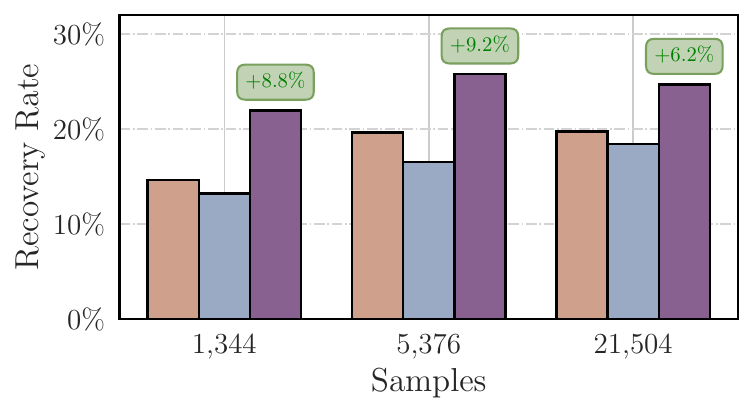}
  \caption{Recovery rate$@10$ for \emph{HotpotQA} }
  \label{fig:recovery_hotpot}
\end{subfigure}%
\hfill 
\begin{subfigure}[b]{.46\textwidth}
  \centering
    \includegraphics[width=1\textwidth]{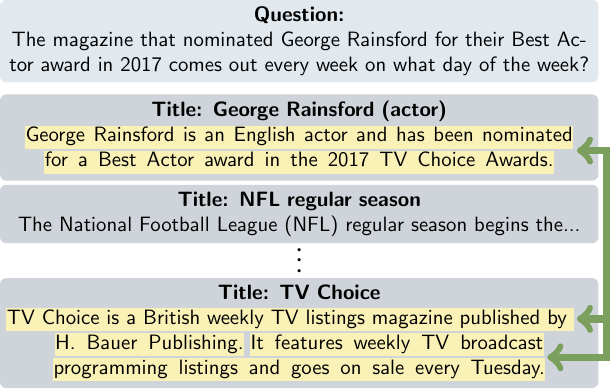}
  \caption{Human-labeled interaction identified by \SpecExp{}.}
  \label{fig:hotpot_additional}
\end{subfigure}
\hfill
\caption{(a) \SpecExp{} recovers more human-labeled features with significantly fewer training masks as compared to other methods. (b) For a long-context example ($n = 128$ sentences), \SpecExp{} identifies the three human-labeled sentences as the most important third order interaction while ignoring unimportant contextual information.}
\vspace{-8pt}
\end{figure*}

\vspace{-8pt}
\subsection{Faithfulness and Runtime}
\vspace{-3pt}

Fig.~\ref{fig:faith} shows the faithfulness of \SpecExp{} compared to other methods. We also plot the runtime of all approaches for the \emph{Sentiment} dataset for different values of $n$. 
All attribution methods are learned over a fixed number of training masks.

\textbf{Comparison to Interaction Indices } \SpecExp{} maintains competitive performance with the best-performing interaction indices across datasets. 
Recall these indices enumerate \emph{all possible interactions}, whereas \SpecExp{} does not. 
This difference is reflected in the runtimes of Fig.~\ref{fig:faith}(a).
The runtime of other interaction indices explodes as $n$ increases while \SpecExp{} does not suffer any increase in runtime. 

\vspace{-2pt}
\textbf{Comparison to Marginal Attributions } For input lengths $n$ too large to run interaction indices, \SpecExp{} is significantly more faithful than marginal attribution approaches across all three datasets.

\vspace{-2pt}
\textbf{Varying number of training masks } Results in Appendix ~\ref{apdx:experiments} show that \SpecExp{} continues to out-perform other approaches as we vary the number of training masks. 

\vspace{-2pt}
\textbf{Sparsity of \SpecExp{} Surrogate Function} Results in Appendix ~\ref{apdx:experiments}, Table~\ref{tab:faith} show 
surrogate functions learned by \SpecExp{} have Fourier representations where only $\sim 10^{-100}$ percent of coefficients are non-zero.

\vspace{-6pt}
\subsection{Removal}
\label{subsec:removal}

Fig.~\ref{fig:fidelity} plots the change in model output as we mask the top $r$ features for different regimes of $n$. 

\vspace{-2pt}
\textbf{Small $n$ } \SpecExp{} is competitive with other interaction indices for \textit{Sentiment}, and out-performs them for \textit{HotpotQA} and \textit{DROP}. 
Performance of \SpecExp{} in this task is particularly notable since Shapley-based methods are designed to identify a small set of influential features. 
On the other hand, \SpecExp{} does not optimize for this metric, but instead learns the function $f(\cdot)$ over all possible $2^n$ masks. 

\textbf{Large $n$ } \SpecExp{} out-performs all marginal approaches, indicating the utility of considering interactions.

\vspace{-10pt}
\subsection{Recovery Rate of Human-Labeled Interactions}

We compare the recovery rate \eqref{eq:recovery_k} for $r = 10$ of \SpecExp{} against third order Faith-Banzhaf and Faith-Shap interaction indices. 
We choose third order interaction indices because all examples 
are answerable with information from at most three sentences, i.e., maximum degree $d = 3$.
Recovery rate is measured as we vary the number of training masks. 

Results are shown in Fig.~\ref{fig:recovery_hotpot}, where \SpecExp{} has the highest recovery rate of all interaction indices across all sample sizes. 
Further, \SpecExp{} achieves close to its maximum performance with few samples, other approaches require many more samples to approach the recovery rate of \SpecExp{}. 

\textbf{Example of Learned Interaction by \SpecExp{}} Fig.~\ref{fig:hotpot_additional} displays a long-context example (128 sentences) from \emph{HotpotQA} whose answer is contained in the three highlighted sentences. 
\SpecExp{} identifies the three human-labeled sentences as the most important third order interaction while ignoring unimportant contextual information. 
Other third order methods are not computable at this length. 

\begin{figure*}[t]
    \centering
    \includegraphics[width=0.9\linewidth]{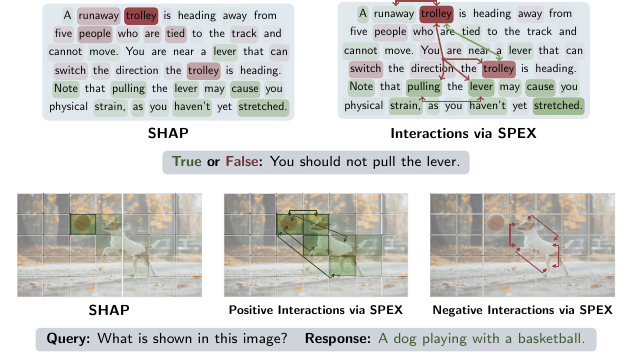}
    \caption{SHAP provides marginal feature attributions. Feature interaction attributions computed by SPEX provide a more comprehensive understanding of (above) words interactions that cause the model to answer incorrectly and (below) interactions between image patches that informed the model's output.}
    \label{fig:caseStudies}
\end{figure*}

\vspace{-2pt}
\section{Case Studies}
\label{sec:case-study}
In this section, we apply \SpecExp{} to two case studies: debugging incorrect responses and visual question answering. Refer to Appendix~\ref{apdx:experiments} for further details on implementation.

\subsection{Debugging Incorrect LLM Responses}

LLMs often struggle to correctly answer modified versions of popular puzzle questions, even when these alterations trivialize the problem \cite{williams2024easy}. In this spirit, we consider a variant of the classic trolley problem:

\begin{quote}
\leftskip=.001in \rightskip=.001in
A runaway trolley is heading \textbf{away} from five people who are tied to the track and cannot move. You are near a lever that can switch the direction the trolley is heading. Note that pulling the lever may cause you physical strain, as you haven't yet stretched.

\textbf{\textcolor[HTML]{455935}{ True} or \textcolor[HTML]{6d3336}{False}: You should not pull the lever.}
\end{quote}

\texttt{GPT-4o mini} \cite{openai2024gpt4ocard} incorrectly selects the answer \textcolor[HTML]{6d3336}{false} 92.1\% of the time. To understand the response, we run SHAP and \SpecExp{} over a value function that measures the logit associated to the output true. 

Fig.~\ref{fig:caseStudies} presents the results of these methods: words and interactions highlighted in green contribute positively to producing the correct output, while those in red lead the model toward an incorrect response. SHAP indicates that both instances of the word \emph{trolley} have the most significant negative impact, while the last sentence appears to aid the model in answering correctly. A more comprehensive understanding is provided by the top interactions learned via \SpecExp{}. These interactions indicate a negative fourth order interaction involving the two instances of \emph{trolley}, as well as the words \emph{pulling} and \emph{lever}. 
This negative interaction is emblematic of the original problem's formulation, indicating that the model may be over-fit.


\subsection{Visual Question Answering}

VQA involves answering questions based on an image. \citet{ petsiuk2018, frank2021, parcalabescu2023} consider model-agnostic methods for attributing the marginal contributions of image regions to the generated response. 
In many compositional reasoning tasks, interactions are key and marginal attributions are insufficient. We illustrate this using an image of a dog playing with a basketball and prompting the \texttt{LLaVA-NeXT-Mistral-7B} model \citep{liu2023} with \emph{``What is shown in this image?"}. This yields the response \emph{``A dog playing with a basketball.''}. 

In Fig.~\ref{fig:caseStudies}, SHAP indicates that image patches containing the ball and the dog are important, but does not capture their interactions. Positive interactions obtained via \SpecExp{} reveal that the presence of both the dog and the basketball together contributes significantly more to the response than the sum of their individual contributions. This suggests that the model not only recognizes the dog and the basketball as separate objects but also understands their interaction---dog playing with the ball---as crucial for forming the correct response. Negative interactions between different parts of the dog indicate redundancy, implying that the total effect of these regions is less than the sum of their marginal contributions.

    
\section{Conclusion}
\label{sec:conclusion}
Identifying feature interactions is a critical problem in machine learning. 
We have proposed \SpecExp{}, the first interaction based model-agnostic post-hoc explanation algorithm that is able to scale to over $1000$ features. 
\SpecExp{} achieves this by making a powerful connection to the field of channel coding. This enables \SpecExp{} to avoid the $O(n^{d})$ complexity that existing feature interaction attribution algorithms suffer from. 
%
Our experiments show \SpecExp{} is able to significantly outperform other methods across the \emph{Sentiment}, \emph{Drop} and \emph{HotpotQA} datasets in terms of faithfulness, feature removal, and interaction recovery rate. 

\paragraph{Limitations} Sparsity is central to our algorithm, and without an underlying sparse structure, \SpecExp{} can fail. 
Furthermore, even though we make strides in terms of sample efficiency, the number of samples still might remain too high for many applications, particularly when inference cost or time is high.
Another consideration is the degree of human understanding we can extract from computed interactions. 
Manually parsing interactions can be slow, and useful visualizations of interactions vary by modality.
Further improvements in visualization and post-processing of interactions are needed.

\paragraph{Future Work} \SpecExp{} works in a non-adaptive fashion, pre-determining the masking patterns $\bbm$. For greater sample efficiency, \emph{adaptive} algorithms might be considered, where initial model inferences help determine future masking patterns. In addition, we have focused on \emph{model-agnostic} explanations, but future work could consider combining this with internal model structure. 
Finally, interactions are a central aspect of the \emph{attention} structures in transformers. Studying the connection between \SpecExp{} and sparse attention  \cite{chen2021scatterbrain} is another direction for future research.


\section*{Impact Statement}
Getting insights into the decisions of deep learning models offers significant advantages, including increased trust in model outputs. By reasoning about the rationale behind a model's decisions with the help of \SpecExp{}, we can develop greater confidence in its output, and use it to aid in our own reasoning. When using analysis tools like \SpecExp{}, it's crucial to avoid over-interpretation of results.
\bibliography{main}
\bibliographystyle{icml2025}

\newpage
\appendix
\onecolumn
\section{Algorithm Details}\label{apdx:algorithm}
\subsection{Introduction}
This section provides the algorithmic details behind \SpecExp{}. The algorithm is derived from the sparse Fourier (Hadamard) transformation described in \citet{li2015spright}. Many modifications have been made to improve the algorithm and make it suitable for use in this application. Application of the original algorithm proposed in \citet{li2015spright} fails for all problems we consider in this paper. In this work, we focus on applications of \SpecExp{} and defer theoretical analysis to future work. 

\paragraph{Relevant Literature on Sparse Transforms} This work develops the literature on sparse Fourier transforms. The first of such works are \cite{Hassanieh2012, stobbe2012, Pawar2013}. The most relevant literature is that of the sparse Boolean Fourier (Hadamard) transform \cite{li2015spright,amrollahi2019efficiently}. Despite the promise of many of these algorithms, their application has remained relatively limited, being used in only a handful of prior applications. Our code base is forked from that of \cite{erginbas2023efficiently}.
In this work we introduce a series of major optimizations which specifically target properties of explanation functions. By doing so, our algorithm is made significantly more practical and robust than any prior work.

\paragraph{Importance of the Fourier Transform} 
The Fourier transform does more than just impart critical algebraic structure. 
The orthonormality of the Fourier transform means that small noisy variations in $f$ remain small in the Fourier domain. In contrast, AND interactions, which operate under the non-orthogonal Möbius transform \cite{kang2024learning}, can amplify small noisy variations, which limits practicality. Fortunately, this is not problematic, as it is straightforward to generate AND interactions from the surrogate function $\hat{f}$. Many popular interaction indices have simple definitions in terms of $F$.  Table~\ref{tab:fourier-def-1} highlights some key relationships, and Appendix~\ref{app:interactions} provides a comprehensive list. 

\begin{table}[h]
\centering
\begin{tabular}{@{}ccc@{}}
\toprule
\textbf{Shapley Value} & \textbf{Banzhaf Interaction Index}         & \textbf{M\"obius Coefficient}  \\ \midrule
 $\mathrm{SV}(i) = \sum \limits_{S \ni i,\; \abs{S} \text{ odd}}F(S)/\abs{S} $& $I^{BII}(S) = (-2)^{|S|} F(S)$      &        $I^{M}(S) = (-2)^{|S|} \sum \limits_{T \supseteq S} F(T)$   \\ \bottomrule
\end{tabular}
\caption{Popular attribution scores in terms of Fourier coefficients}
\label{tab:fourier-def-1}
\end{table}

\subsection{Directly Solving the LASSO} 
Before we proceed, we remark that in cases where $n$ is not too large, and we expect the degree of nonzero $\abs{\bk} \leq d$ to be reasonably small, enumeration is actually not infeasible. In such cases, we can set up the LASSO problem directly:
\begin{equation}\label{eq:LASSO_apdx}
    \hat{F} = \argmin_{\tilde{F}} \sum_{\bbm}\abs{f(\bbm) - \sum_{\abs{\bk} \leq d}  \tilde{F}(\bk)}^2 + \lambda \norm{\tilde{F}}_1.
\end{equation}
Note that this is distinct from the \emph{Faith-Banzhaf} and \emph{Faith-Shapley} solution methods because those perform regression over the AND, M\"obius basis.
We observe that the formulation above typically outperforms these other approaches in terms of faithfulness, likely due to the properties of the Fourier transform. 

Most popular solvers use \emph{coordinate descent} to solve \eqref{eq:LASSO_apdx}, but there is a long line of research towards efficiently solving this problem. In our code, we also include an implementation of Approximate Message Passing (AMP) \cite{maleki2010approximate}, which can be much faster in many cases. Much like the final phase of \SpecExp{}, AMP is a low complexity message passing algorithm where messages are iteratively passed between factor nodes (observations) and variable nodes. 

A more refined version of \SpecExp{}, would likely examine the parameters $n$ and the maximum degree $d$ and determine whether or not to directly solve the LASSO, or to apply the full \SpecExp{}, as we describe in the following sections.

\subsection{Masking Pattern Design and Model Inference}
The first part of the algorithm is to determine which samples we collect. All steps of this part of the algorithm are outlined in Algorithm~\ref{alg:collect}. This is governed by two structures: the random linear codes $\bM_c$ and the BCH parity matrix $\bP$. Random linear codes have been well studied as central objects in error correction and cryptography. They have previously been considered for sparse transforms in \cite{amrollahi2019efficiently}. They are suitable for this application because they roughly uniformly hash $\bk$ with low hamming weight. 

The use of the $\bP \in \bbF_2^{p\times n}$, the parity matrix of a binary BCH code is novel. These codes are well studied for the applications in error correction \cite{Lin1999}, and they were once the preeminent form of error correction in digital communications. A primitive, narrow-sense BCH code is characterized by its length, denoted $n_c$, dimension, denoted $k_c$ (which we want to be equal to our input dimension $n$) and its error correcting capability $t_c = 2d + 1$, where $d$ is the minimum distance of the code. For some integer $m > 3$ and $t_c < 2^{m-1}$, the parameters satisfy the following equations:
\begin{eqnarray}
    n_c &=& 2^m -1 \\
    p = n_c-k_c &\leq& mt.
\end{eqnarray}
Note that the above says we can bounds $p \leq t \left\lceil \log_2(n_c) \right \rceil$, and it is easy to solve for $p$ given $n=k_c$ and $t$, however, explicitly bounding $p$ in terms of $n$ and $t$ is difficult, so for the purpose of discussion, we simply write $p \approx{t\log(n)}$, since $n_c = p + n$, and we expect $n \gg p$ in nearly all cases of interest. 

We use the software package \verb|galois| \cite{Hostetter_Galois_2020} to construct a generator matrix, $\bG \in \bbF_2^{n_c \times k_c}$ in systematic form:
\begin{equation} \label{eq:sys-form}
    \bG = 
\begin{bmatrix}
\bI_{k_c \times k_c}\\
\bP
\end{bmatrix}
\end{equation}
\begin{algorithm}
   \caption{Collect Samples}
   \label{alg:collect}
\begin{algorithmic}[1]
   \State {\bfseries Input:} Parameters $(n, t, b, C=3, \gamma=0.9)$, Query function $f(\cdot)$ 
     \For{$j=1$ {\bfseries to} $n$, $i=1$ {\bfseries to} $b$, $c=1$ {\bfseries to} $C$} \Comment{Generate random linear code}
     \State $X_{ij} \sim \mathrm{Bern}(0.5)$
     \State $\left[\bM_c\right]_{i,j} \gets X_{i,j}$
   \EndFor
   \State $\mathrm{Code} \gets \mathrm{BCH}(n_c=n_c, k_c\geq n,  t_c=t)$ \Comment{Systematic BCH code with dimension $n$ and correcting capacity $t$}
   \State $p \gets n_c - n$
   \State $\bP \gets \mathrm{Code}.\bP$
   \State $\cP \gets \mathrm{rows}(\bP) = \left[ \boldsymbol{0}, \bp_1, \dotsc, \bp_p \right]$
   \ForAll{ $\bell \in \bbF_2^b$, $i \in \{0, \dotsc, p\}$, $c \in \{1, \dotsc, C\}$}
   \State $u_{c,i} (\bell) \gets f\left(\bM_c^\trans \bell + \cP[i] \right)$ \Comment{Query the model at masking patterns}
   \EndFor
      \ForAll{$i \in \{0, \dotsc, p\}$,  $c \in \{1, \dotsc, C\}$} 
   \State $U_{c,i} \gets \mathrm{FFT}(u_{c,i})$ \Comment{Compute the Boolean Fourier transform of the collected samples}
   \EndFor
   \State $\bU_c \gets \left[ U_{c,1}, \dotsc, U_{c, p}\right]$ 
    \State {\bfseries Output:} Processed Samples  $\bU_c,U_{c,0}\; c=1, \dotsc, C$
\end{algorithmic}
\end{algorithm}
Note that according to \eqref{eq:sys-form} $\bP \in \bbF_2^{ p \times k_c}$. In cases where $k_c > n$, we consider only the first $n$ rows of $\bP$. This is a process known as \emph{shortening}.
Our application of this BCH code in our application is rather unique. Instead of the typical use of a BCH code as \emph{channel correction} code, we use it as a \emph{joint source channel code}. 

Let $\bp_0 = \boldsymbol{0}$, and let $\bp_i,\;i=1,\dotsc,p$ correspond to the rows of $\bP$. We collect samples written as:
\begin{equation}\label{eq:subsample_apdx}
    u_{c,i} (\bell) \gets f\left(\bM_c^{\trans} \bell + \bp_i \right) \;\forall \bell \in \bbF_2^b,\; c = 1,\dotsc, C,\;i=0, \dotsc, p.
\end{equation}
Note that the total number of unique samples can be upper bounded by $C(p+1)2^{b}$. For large $n$ this upper bound is nearly always very close to the true number of unique samples collected. After collecting each sample, we compute the boolean Fourier transform. The forward and inverse transforms as we consider in this work are defined below.
\begin{equation}\label{eq:transform_def}
    \text{Forward:}\quad F(\bk) = \frac{1}{2^{n}} \sum_{\bbm \in \bbF_2^n} (-1)^{\inp{\bk}{\bbm}}f(\bbm) \qquad \text{Inverse:}\quad f(\bbm)  = \sum_{\bk \in \bbF_2^n} (-1)^{\inp{\bbm}{\bk}} F(\bk),
\end{equation}
When samples are collected according to \eqref{eq:subsample_apdx}, after applying the transform in \eqref{eq:transform_def}, the transform of $u_{c,i}$ can be written as:
\begin{equation} \label{eq:alais_apdx}
    U_{c,i}(\bj) = \sum_{\bk\; : \;\bM_c \bk = \bj} (-1)^{\inp{\bp_i}{\bk}} F(\bk).
\end{equation}
To ease notation, we write $\bU_c = [U_{c,1}, \dotsc, U_{c,p}]^T$. Then we can write
\begin{equation}\label{eq:factor_nodes}
    \bU_c(\bj) = \sum_{\bk\; : \;\bM_c \bk = \bj} (-1)^{\bP \bk} F(\bk),
\end{equation}
where we have used the notation $(-1)^{\bP \bk} = [(-1)^{\inp{\bp_0}{\bk}}, \dots, (-1)^{\inp{\bp_p}{\bk}}]^T$. We call the $(-1)^{\bP \bk}$ the \emph{signature} of $\bk$. This signature helps to identify the index of the largest interactions $\bk$, and is central to the next part of the algorithm. Note that we also keep track of $U_{c,0}(\bj)$, which is equal to the unmodulated sum $U_{c,0}(\bj) = \sum_{\bk\; : \;\bM_c \bk = \bj} F(\bk)$.
\subsection{Message Passing for Fourier Transform Recovery}
Using the samples \eqref{eq:factor_nodes}, we aim to recover the largest Fourier coefficients $F(\bk)$. To recover these samples we apply a message passing algorithm, described in detail in Algorithm~\ref{alg:message-pass}. The factor nodes are comprised of the $C2^b$ vectors $\bU_c(\bj) \; \forall \bj \in \bbF_2^b$. Each of these factor nodes are connected to all values $\bk$ that are comprise their sum, i.e., $\{\bk\mid \bM_c \bk = \bj\}$. Since the number of variable nodes is too great, we initialize the value of each variable node, which we call $\hat{F}(\bk)$ to zero implicitly. The values $\hat{F}(\bk)$ for each variable node indexed by $\bk$ represent our estimate of the Fourier coefficients.

\subsubsection{The message from factor to variable} Consider an arbitrary factor node $\bU_c(\bj)$ initialized according to \eqref{eq:factor_nodes}. We want to understand if there are any large terms $F(\bk)$ involved in the sum in \eqref{eq:factor_nodes}. To do this, we can utilize the signature sequences $(-1)^{\bP \bk}$. If $\bU_c(\bj)$ is strongly correlated with the signature sequence of a given $\bk$, i.e., if $\abs{\inp{(-1)^{\bP \bk}}{\bU_c(\bj)}}$ is large, and $\bM_c \bk = \bj$, from the perspective of $\bU_c(\bj)$, it is likely that $F(\bk)$ is \emph{large}. Searching through all $\bM_c \bk = \bj$, which, for a full rank $\bM_c$ contains $2^{n-b}$ different $\bk$ is intractable, and likely to identify many spurious correlations. Instead, we rely on the structure of the BCH code from which $\bP$ is derived to solve this problem.

\paragraph{BCH Hard Decoding} The BCH decoding procedure is based on an idea known generally in signal processing as ``treating interference as noise". For the purpose of explanation, assume that there is some $\bk^*$ with large $F(\bk^*)$, and all other $\bk$ such that $\bM_c \bk = \bj$ correspond to small $F(\bk)$. For brevity let $\cA_c(\bj) = \{\bk\mid \bM_c \bk = \bj\}$. We can write:
\begin{equation}
    \bU_c(\bj) = F(\bk^*)(-1)^{\bP \bk^*} + \sum_{\cA_c(\bj) \setminus \bk^*} (-1)^{\bP \bk} F(\bk)
\end{equation}
After we normalize with respect to $U_{c,0}(\bj)$ this yields:
\begin{eqnarray}
    \frac{\bU_c(\bj)}{U_{c,0}(\bj)} &=& \left( \frac{1}{1 + \sum_{\cA_c(\bj) \setminus \bk^*}F(\bk)/F(\bk^*)}\right)(-1)^{\bP \bk^*} + \left(\frac{\sum_{\cA_c(\bj) \setminus \bk^*} (-1)^{\bP \bk}F(\bk)}{F(\bk^*) + \sum_{\cA_c(\bj) \setminus \bk^*} F(\bk)}\right) \\
    &=& A(\bj) (-1)^{\bP \bk^*} + \bw(\bj). \label{eq:ratio}
\end{eqnarray}
As we can see, the ratio \eqref{eq:ratio} is a noise-corrupted version of the signature sequence of $\bk^*$. To estimate $\bP \bk$ we apply a nearest-neighbor estimation rule outlined in Algorithm~\ref{alg:bch-hard}. In words, if the $i$th coordinate of the vector \eqref{eq:ratio} is closer to $-1$ we estimate that the corresponding element of $\bP \bk$ to be $1$, conversely, if the $i$th coordinate is closer to $1$ we estimate the corresponding entry to be $0$. This process effectively converts the multiplicative noise $A$ and additive noise $\bw$ to a noise vector in $\bbF_2$. We can write this as $\bP \bk^{*} + \bn$. According to the Lemma~\ref{lem:decoding} if the hamming weight $\bn$ is not too large, we can recover $\bk^*$. 

\begin{lemma}\label{lem:decoding}
    If $\abs{\bn} + \abs{\bk^*} \leq t$, where $\bn$ is the additive noise in $\bbF_2$ induced by the noisy process in \eqref{eq:ratio} and the estimation procedure in Algorithm~\ref{alg:bch-hard}, then we can recover $\bk^*$.
\end{lemma}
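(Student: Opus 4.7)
The plan is to reduce the problem to standard BCH decoding of a cleverly constructed virtual received word. Recall that the BCH code used has generator matrix in systematic form $\bG = [\bI_n; \bP]^\trans \in \bbF_2^{n_c \times n}$, so the codeword associated to a message $\bk^* \in \bbF_2^n$ is $\bG \bk^* = [\bk^*; \bP \bk^*]^\trans$, and by construction the code corrects up to $t$ bit-flip errors in any received word of length $n_c = n + p$ via the Berlekamp--Massey algorithm.

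First, I would define a virtual received word by concatenating the all-zeros vector with the observed noisy parity: $\br = [\boldsymbol{0}_n; \bP \bk^* + \bn]^\trans \in \bbF_2^{n_c}$. Then I would write $\br = \bG \bk^* + \be$ where the (virtual) error vector is $\be = [\bk^*; \bn]^\trans$. Its Hamming weight is exactly
\begin{equation}
    \abs{\be} = \abs{\bk^*} + \abs{\bn} \leq t,
\end{equation}
by hypothesis. At this point I would invoke the standard BCH decoding guarantee: since the number of errors is at most $t$, applying Berlekamp--Massey to $\br$ returns the unique codeword $\bG \bk^*$ within Hamming distance $t$, and reading off the first $n$ coordinates of that codeword recovers $\bk^*$.

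The only conceptually subtle step is the virtualization: we do not actually transmit or observe $\bk^*$ in the systematic portion of any codeword; we only observe the parity portion $\bP \bk^* + \bn$. The trick is that by declaring the message part of $\br$ to be all zeros, we charge the nonzero coordinates of $\bk^*$ against the BCH decoder's error budget. The decoder does not care where the errors live, only that their total weight is at most $t$, so it still succeeds. This is the main point I would emphasize, as it is what enables a single BCH code to simultaneously convey both the support and the signature information needed in the aliasing equation \eqref{eq:alais_apdx}.

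The main obstacle, such as it is, lies not in the algebra but in clearly articulating this reinterpretation of $\bk^*$ as ``error'' rather than ``message.'' Once that is set up, the lemma is immediate from the standard minimum-distance argument for BCH codes and requires no additional estimation beyond the nearest-neighbor rounding in Algorithm~\ref{alg:bch-hard} that produced $\bn$ in the first place.
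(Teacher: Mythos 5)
Your proposal is correct and matches the paper's own proof essentially verbatim: both construct the virtual received word $\br = [\boldsymbol{0};\, \bP\bk^* + \bn]$, observe that its Hamming distance to the codeword $\bG\bk^* = [\bk^*;\, \bP\bk^*]$ is $\abs{\bk^*} + \abs{\bn} \leq t$, and invoke the $t$-error-correcting guarantee of the BCH code. Your added emphasis on charging the systematic part of $\bk^*$ against the decoder's error budget is exactly the idea the paper relies on, just stated more explicitly.
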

\begin{proof}
    Observe that the generator matrix of the BCH code is given by \eqref{eq:sys-form}. Thus, there exists a codeword of the form
    \begin{equation}
    \bc = \bG \bk^*= 
\begin{bmatrix}
\bk^*\\
\bP \bk^*
\end{bmatrix}
\end{equation}
Now construct the ``received codeword" as in Algorithm~\ref{alg:bch-hard}:
    \begin{equation}
    \br = 
\begin{bmatrix}
\boldsymbol{0}\\
\bP \bk^* + \bn
\end{bmatrix}
\end{equation}
Thus $\abs{\bc- \br} = \abs{\bn} + \abs{\bk^*}$. Since the BCH code was designed to be $t$ error correcting, Decoding the code will recover $\bc$, which contains $\bk^*$.
\end{proof}
For decoding we use the implementation in the python package \verb|galois| \cite{Hostetter_Galois_2020}. It implements the standard procedure of the Berlekamp-Massey Algorithm followed by the Chien Search algorithm for BCH decoding. 
\begin{algorithm}
   \caption{BCH Hard Decode}
   \label{alg:bch-hard}
\begin{algorithmic}[1]
   \State {\bfseries Input:} Observation $\bU_c(\bj)$, Decoding function $\mathrm{Dec}(\cdot)$
   \State $r_i \gets 0 \; i=1\dotsc, n$
   \ForAll{$i \in n+1, \dotsc, n+p$}
        \State $r_i \gets \mathds{1}\left\{ \frac{U_{c,i}(\bj)}{U_{c,0}(\bj) } < 0 \right\}$
   \EndFor
    \State dec, $ \hat{\bk} \gets \mathrm{Dec}(\br)$
    \State {\bfseries Output:} dec, $\hat{\bk}$ 
\end{algorithmic}
\end{algorithm}

\paragraph{BCH Soft Decoding} In practice the conversion of the real-valued noisy observations \eqref{eq:ratio} to noisy elements in $\bbF_2$ is a process that destroys valuable information. In coding theory, this is known as \emph{hard input} decoding, which is typically suboptimal. For example, certain coordinates will have values $\frac{U_{c,i}(\bj)}{U_{c,0}(\bj)} \approx 0$. For such coordinates, we have low confidence about the corresponding value of $(-1)^{\inp{\bp_i}{\bk^*}}$, since it is equally close to $+1$ and $-1$. This uncertainty information is lost in the process of producing a hard input. With this so-called \emph{soft information} it is possible to recover $\bk^*$ even in cases where there are more than $t$ errors in the hard decoding case. We use a simple soft decoding algorithm for BCH decoding known as a chase decoder. The main idea behind a chase decoder   is to perform hard decoding on the $d_{\text{chase}}$ most likely hard inputs, and return the decoder output of the most likely hard input that successfully decoded. In practical setting like the ones we consider in this work, we don't have an understanding of the noise in \eqref{eq:ratio}. A practical heuristic is to simply look at the \emph{margin} of estimation. In other words, if $\abs{\frac{U_{c,i}(\bj)}{U_{c,0}(\bj)}}$ is large, we assume it has high confidence, while if it is small, we assume the confidence is low. Interestingly, if we assume $A(\bj) = 1$ and $\bw(\bj) \sim \cN(0, \sigma^2)$ in \eqref{eq:ratio}, then the ratio corresponds exactly to the logarithm of the likelihood ratio (LLR) $\log \left( \frac{\mathrm{Pr}\left(\inp{\bp_i}{\bk^*} = 0\right)}{\mathrm{Pr}\left(\inp{\bp_i}{\bk^*} = 1\right)}\right)$. For the purposes of soft decoding we interpret these ratios as LLRs. Pseudocode can be found in Algorithm~\ref{alg:bch-soft}.

\emph{Remark: BCH soft decoding is a well-studied topic with a vast literature. Though we put significant effort into building a strong implementation of \SpecExp{}, we have used the simple Chase Decoder (described in Algorithm~\ref{alg:bch-soft} below) as a soft decoder. The computational complexity of Chase Decoding scales as $2^{d_\text{chase}}$, but other methods exist with much lower computational complexity and comparable performance.}

\begin{algorithm}
   \caption{BCH Soft Decode (Chase Decoding)}
   \label{alg:bch-soft}
\begin{algorithmic}[1]
   \State {\bfseries Input:} Observation $\bU_c(\bj)$, Decoding function $\mathrm{Dec}(\cdot)$, Chase depth $d_{\text{chase}}$.
   \State $r_i \gets 0 \; i=1\dotsc, n$
   \State $\cR \gets d_{\text{chase}}$ most likely hard inputs \Comment{Can be computed efficiently via dynamic programming}
   \State dec $\gets False$
   \State $j \gets 0$
   \While{$dec$ is $False$ and $j \leq d_{\text{chase}}$}
        \State $\br_{(n+1):(n+p)} \gets \cR [j]$
        \State $j \gets j+1$
        \State dec, $ \hat{\bk} \gets \mathrm{Dec}(\br)$
   \EndWhile
    \State {\bfseries Output:} dec, $\hat{\bk}$ 
\end{algorithmic}
\end{algorithm}

If we successfully decode some $\bk$ from the BCH decoding process via the bin $\bU_{c}(\bj)$, we construct a message to the corresponding variable node. Before we do this, we verify that the $\bk$ term satisfies $\bM_c \bk = \bj$. This acts as a final check to increase our confidence in the output of $\bk$. The message we construct is of the following form:
\begin{equation}\label{eq:check_msg}
    \mu_{(c,\bj) \rightarrow \bk} = \inp{(-1)^{\bP \bk}}
        {\bU_c(\bj)}/p
\end{equation}
To understand the structure of this message. This message can be seen as an estimate of the Fourier coefficient. Let's assume we are computing this message for some $\bk^*$:   
\begin{equation}
\mu_{(c,\bj) \rightarrow \bk^*} = F(\bk^*) + \sum_{\cA(\bj)\setminus \bk^*}\underbrace{\frac{1}{p}\inp{(-1)^{\bP \bk}}{(-1)^{\bP \bk^*}}}_{\text{typically small}}F(\bk)
\end{equation}
The inner product serves to reduce the noise from the other coefficients in the sum.
\begin{algorithm}
   \caption{Message Passing}
   \label{alg:message-pass}
\begin{algorithmic}[1]
\State {\bfseries Input:} Processed Samples  $\bU_c, c=1, \dotsc, C$
\State $\cS = \left\{ (c,\bj): \bj \in \bbF_2^b, c \in \{1, \dotsc, C\}\right\}$ \Comment{Nodes to process}
\State $\hat{F}[\bk] \gets 0 \;\forall\bk$
\State $\cK \gets \emptyset$
\While{$\abs{\cS} > 0$} \Comment{Outer Message Passing Loop}
    \State $\cS_{\text{sub}} \gets \emptyset$
    \State $\cK_{\text{sub}} \gets \emptyset$
    \For{$(c,\bj) \in \cS$}
        \State dec, $\bk$ $\gets \mathrm{DecBCH} (\bU_c(\bj))$ \Comment{Process Factor Node}
        \If{dec}
            \State corr $\gets \frac{\inp{(-1)^{\bP \bk}}{\bU_c(\bj)}}{\norm{\bU_c(\bj)}^2}$
        \Else
            \State corr $\gets 0$
        \EndIf
        \If{corr $> \gamma$} \Comment{Interaction identified}
            \State $\cS_{\text{sub}} \gets \cS_{\text{sub}} \cup \{(\bk, c, \bj)\}$
            \State $\cK_{\text{sub}} \gets \cK_{\text{sub}} \cup \{\bk\}$
        \Else
            \State $\cS \gets \cS \setminus \{ (c, \bj)\}$ \Comment{Cannot extract interaction}
        \EndIf
    \EndFor
    \For{ $\bk \in \cK_{\text{sub}}$}
        \State $\cS_{\bk} \gets \{ (\bk', c', \bj') \mid (\bk', c', \bj') \in \cS_{\text{sub}}, \bk' = \bk \}$
        \State $\mu_{(c,\bj) \rightarrow \bk} \gets \inp{(-1)^{\bP \bk}}
        {\bU_c(\bj)}/p$ 
        \State $\mu_{\bk \rightarrow \text{all}} \gets \sum_{(\bk, c, \bj) \in \cS_{\bk}} \mu_{(c,\bj) \rightarrow \bk}$
        \State $\hat{F}(\bk) \gets \hat{F}(\bk) + \mu_{\bk \rightarrow \text{all}}$ \Comment{Update variable node}
        \For{$c \in \{ 1, \dotsc, C\}$}
            \State $\bU_c(\bM_c \bk) \gets \bU_c(\bM_c \bk) - \mu_{\bk \rightarrow \text{all}}\cdot(-1)^{\bP \bk}$ \Comment{Update factor node}
            \State $\cS \gets \cS \cup \{ (c, \bM_c \bk)\}$
        \EndFor
    \EndFor
    \State $\cK \gets \cK \cup \cK_{\text{sub}}$
\EndWhile
  \State {\bfseries Output: $\left\{ \left(\bk, \hat{F}(\bk)\right) \mid \bk \in \cK\right\}$}, interactions, and scalar values corresponding to interactions.
\end{algorithmic}
\end{algorithm}
\subsubsection{The message from variable to factor}
The message from factor to variable is comparatively simple. The variable node takes the average of all the messages it receives, adding the result to its state, and then sends that average back to all connected factor nodes. These factor nodes then subtract this value from their state and then the process repeats.

\subsection{Computational Complexity}

\paragraph{Generating masking patterns $\bbm$} Constructing each masking pattern requires $n2^b$ for each $\bM_c$. The algorithm for computing it efficiently involves a gray iteratively adding to an $n$ bit vector and keeping track of the output in a Gray code. Doing this for all $C$, and then adding all $p$ additional shifting vectors makes the cost $O(Cpn2^b)$.

\paragraph{Taking FFT} For each $u_{c,i}$ we take the Fast Fourier transform in $b2^b$ time, with a total of $O(Cpb2^b)$. This is dominated by the previous complexity since, $b \leq n$

\paragraph{Message passing} One round of BCH hard decoding is $O(n_ct + t^2)$. For soft decoding, this cost is multiplied by $2^{d_{\text{chase}}}$, which we is a constant.  Computing the correlation vector is $O(np)$, dominated by the computation of $\bP \bk$. In the worst case, we must do this for all $C 2^b$ vectors $\bU_c(\bj)$. We also check that $\bM \bk = \bj$ before sending the message, which costs $O(nb)$. Thus, processing all the factor nodes costs $O(C2^b(n_c t + t^2 + n(p+b)))$. The number of active (with messages to send) variable nodes is at most $C2^b$, and computing their factors is at most $C$. Thus, computing factor messages is at most $C^22^b$ messages. Finally, factor nodes are updated with at most $C2^b$ variable messages sending messages to at most $C$ factor nodes each, each with a cost of $O(np)$. Thus, the total cost of processing all variable nodes is $O(C^22^b + C^22^bnp)$. The total cost of message is dominated by processing the factors. 

The total complexity is then $O(2^b(n_c t + t^2 + n(p + b))$.
Note that $p = n_c - n = t \log(n_c)$. Due to the structure of the code and the relationship between $n,p$ and $n_c$, one could stop here, and it would be best to if we want to consider very large $t$. For the purposes of exposition, we will assume that $t \ll n$, which implies $n > p$, and thus $p \approx t \log(n)$. In this case, we can write:
\begin{equation}
    \text{Complexity} = O(2^b(nt\log(n)  + nb))
\end{equation}

To arrive at the stated equation in Section~\ref{sec:intro}, we take $2^b = O(s)$. Under the low degree assumption, we have $s = O(d\log(n))$. Then assuming we take $t= O(d)$, we arrive at a complexity of $O(sdn\log(n))$.

\section{Experiment Details}\label{apdx:experiments}
\subsection{Implementation Details} Experiments are run on a server using Nvidia L40S GPUs and A100 GPUs. When splitting text into words or sentences, we make use of the default word and sentence tokenizer from \texttt{nltk} \cite{bird2009natural}. To fit regressions, we use the \texttt{scikit-learn} \cite{scikit-learn} implementations of \texttt{LinearRegression} and \texttt{RidgeCV}.

\subsection{Datasets and Models}

\subsubsection{Sentiment Analysis}
152 movie reviews were used from the \emph{Large Movie Review Dataset} \cite{maas-EtAl:2011:ACL-HLT2011}, supplemented with 8 movie reviews from the \emph{Stanford Sentiment Treebank} dataset \cite{socher2013recursive}. These 160 reviews were categorized using their word counts into 8 groups ([8-15, 16-32, \dots, 1024-2047]), with 20 reviews in each group.

To measure the sentiment of each movie review, we utilize a \texttt{DistilBERT} model \cite{Sanh2019DistilBERTAD} fine-tuned for sentiment analysis \cite{sentimentBert}. When masking, we replace the word with the \texttt{[UNK]} token. We construct an value function over the output logit associated with the positive class.

\subsubsection{HotpotQA}
We consider $100$ examples from the \emph{HotpotQA}\cite{yang2018hotpotqa} dataset. These examples were categorized using the number of sentences into four groups ([8-15, 16-32, 32-64, 64-127]). We use a \texttt{Llama-3.2-3B-Instruct} model with $8$-bit quantization. When masking, we replace with the \texttt{[UNK]} token, and measure the log-perplexity of generating the original output. Since \emph{HotpotQA} is a multi-document dataset, we use the following prompt format. 

\begin{tcolorbox}[colframe=black, colback=white, sharp corners]
\textbf{Title:} \{title\_1\}

\textbf{Content:} \{document\_1\}\\
\dots\\
\textbf{Title:} \{title\_m\}

\textbf{Content:} \{document\_m\}\\

\textbf{Query:} \{question\}. Keep your answers as short as possible. 
\end{tcolorbox}

\subsubsection{DROP}
We consider $100$ examples from the \emph{DROP }\cite{yang2018hotpotqa} dataset. These examples were categorized using the number of words into six groups ([8-15, 16-32, 32-64, 64-127, 128-256, 512-1024]). We use the same model as \emph{HotpotQA} and mask in a similar fashion. We use the following prompt format.

\begin{tcolorbox}[colframe=black, colback=white, sharp corners]
\textbf{Context:} \{context\}

\textbf{Query:} \{question\}. Keep your answers as short as possible. 
\end{tcolorbox} 

\subsubsection{Trolley Problem}
The simplified trolley problem was inspired by the one provided in \cite{williams2024easy}. When masking, the \texttt{[UNK]} token was used to replace words. The following prompt was given to \texttt{gpt-4o-mini-2024-07-18}:

\begin{tcolorbox}[colframe=black, colback=white, sharp corners]
\textbf{System:} Answer with the one word True or False only. Any other answer will be marked incorrect.

\textbf{User:} \{Masked Input\} True or False: You should not pull the lever.
\end{tcolorbox}

A value function was created by finding the difference between the model's logprob associated with the ``True" token minus the logprob of the ``False" token. 
\subsubsection{Visual Question Answering}
The base image was partitioned into a $6\times 8$ grid. To mask, Gaussian blur was applied to the masked cells. The masked image was input into \texttt{LLaVA-NeXT-Mistral-7B}, a large multimodal model, with the following prompt:

\begin{tcolorbox}[colframe=black, colback=white, sharp corners]
\textbf{Context:} \{masked image\}

\textbf{Query:} What is shown in this image?
\end{tcolorbox}
The original output to the unmasked image is 
``A dog playing with a basketball.'' Using the masked images, we build a value function that measures the probability of generating the original output sequence (log probability).

\subsection{Baselines}\label{apdx:baselines}
The following marginal feature attribution baselines were run:
\begin{enumerate}
    \item \emph{LIME}: LIME (Local Interpretable Model-agnostic Explanations) \cite{ribeiro2016should} uses LASSO to build a sparse linear approximation of the value function. The approximation is weighted to be \emph{local}, using an exponential kernel to fit the function better closer to the original input (less maskings). 
    \item \emph{SHAP}: Implemented using KernelSHAP \cite{Lundberg2017}, SHAP interprets the value function as a cooperative game and attributes credit to each of the features according to the Shapley value. KernelSHAP approximates the Shapley values of this game through solving a weighted least squares problem, where the weighting function is informed by the Shapley kernel, promoting samples where very either very few or most inputs are masked. 
    \item \emph{Banzhaf}: Similar to Shapley values, Banzhaf values \cite{banzhaf1964weighted} represent another credit attribution concept from cooperative game theory. We compute the Banzhaf values by fitting a ridge regression to uniformly drawn samples, selecting the regularization parameter through cross-validation.
\end{enumerate}
Furthermore, we compared against the following interaction attribution methods:
\begin{enumerate}[resume]
    \item \emph{Faith-Banzhaf}: The Faith-Banzhaf Interaction Index \cite{tsai2023faith}, up to degree $t$, provides the most faithful $t$\textsuperscript{th} order polynomial approximation of the value function under a uniform kernel. We obtain this approximation using cross-validated ridge regression on uniformly drawn samples.
    \item \emph{Faith-Shap}: Similarly, the Faith-Shapley Interaction Index \cite{tsai2023faith}, up to degree $t$,  provides the most faithful $t$ order polynomial approximation of the value function under a Shapley kernel. As described in \cite{tsai2023faith}, the indices can be estimated through solving a weighted least squares problem. We use the implementation provided in SHAP-IQ \cite{muschalik2024shapiq}.
    
    \item \emph{Shapley-Taylor}: The Shapley-Taylor Interaction Index \cite{dhamdhere2019shapley}, up to degree $t$, provides another interaction definition based on the Taylor Series of the M\"obius transform of the value function. To estimate the interaction indices, we leverage the sample-efficient estimator SVARM-IQ \cite{kolpaczki2024svarm}, as implemented in SHAP-IQ \cite{muschalik2024shapiq}.
\end{enumerate}

\subsection{Sample Complexity}
The total number of samples needed for \SpecExp{} is  $\approx C2^bt \log(n)$. We fix $C=3$ and $t=5$. The table below presents the number of samples used in our experiments for various choices of sparsity parameter $b$ and different input sizes $n$:

\begin{table}[ht]
\centering
\begin{tabular}{c|ccccccc}
\toprule
 & \multicolumn{7}{c}{\textbf{Number of Inputs ($n$)}} \\
\textbf{Sparsity Parameter ($b$)} & 8--11 & 12--36 & 37--92 & 93--215 & 216--466 & 467--973 & 974--1992 \\
\midrule
4 & 1,008 & 1,344 & 1,728 & 1,968 & 2,208 & 2,448 & 2,688 \\
6 & 4,032 & 5,376 & 6,912 & 7,872 & 8,832 & 9,792 & 10,752 \\
8 & 16,128 & 21,504 & 27,648 & 31,488 & 35,328 & 39,168 & 43,008 \\
\bottomrule
\end{tabular}
\caption{Number of samples needed for each $b$ and $n$.}
\label{tab:numSamples}
\end{table}

\begin{table*}[t]
\centering
\resizebox{\textwidth}{!}{
\begin{tabular}{ccccccccccccccccc}\toprule
                                                           \multicolumn{1}{c|}{}       & \multicolumn{1}{c|}{\multirow{2}{*}{$\mathbf{n}$}}           & \multicolumn{1}{c|}{\multirow{2}{*}{\textbf{Avg. Sparsity}}} & \multicolumn{1}{c|}{\multirow{2}{*}{\textbf{Avg. Sparsity Ratio}}}&\textbf{}                & \textbf{}     & \textbf{}        & \multicolumn{3}{c}{\textbf{Faith-Banzhaf}} & \textbf{}        & \multicolumn{3}{c}{\textbf{Faith-Shap}}    & \multicolumn{3}{c}{\textbf{Shapley-Taylor}} \\
                                                           \multicolumn{1}{c|}{}       &        \multicolumn{1}{l|}{}                         & \multicolumn{1}{l|}{} &               \multicolumn{1}{l|}{}               & \textbf{SPEX} & \textbf{LIME} & \textbf{Banzhaf} & \textbf{2nd} & \textbf{3rd} & \textbf{4th} & \textbf{SHAP} & \textbf{2nd} & \textbf{3rd} & \textbf{4th} & \textbf{2nd}  & \textbf{3rd} & \textbf{4th} \\\midrule
\multicolumn{1}{c|}{\multirow{8}{*}{\textit{\textbf{Sentiment}}}} & \multicolumn{1}{c|}{8-15}      & \multicolumn{1}{c|}{369.9}               &      \multicolumn{1}{c|}{$3.70\times 10^{-1}$}         & 1.00                     & 0.83          & 0.84             & 0.96         & 0.99         & 1.00         & 0.62             & 0.93         & 0.98         & 1.00         & 0.72          & 0.81         & 0.92         \\
\multicolumn{1}{c|}{}                                             & \multicolumn{1}{c|}{16-31}     & \multicolumn{1}{c|}{208.7}           &       \multicolumn{1}{c|}{$2.31\times 10^{-4}$}            & 0.97                     & 0.75          & 0.75             & 0.93         & 0.98         &              & 0.20             & 0.89         & 0.95         &              & 0.46          & -710.85      &              \\
\multicolumn{1}{c|}{}                                             & \multicolumn{1}{c|}{32-63}     & \multicolumn{1}{c|}{149.7}            &    \multicolumn{1}{c|}{$3.14\times 10^{-9}$}              & 0.93                     & 0.67          & 0.68             & 0.90         &              &              & -0.25            & 0.82         &              &              & -0.30         &              &              \\
\multicolumn{1}{c|}{}                                             & \multicolumn{1}{c|}{64-127}    & \multicolumn{1}{c|}{118.4}         &        \multicolumn{1}{c|}{$2.19\times 10^{-19}$}             & 0.87                     & 0.66          & 0.66             &              &              &              & -1.17            &              &              &              &               &              &              \\
\multicolumn{1}{c|}{}                                             & \multicolumn{1}{c|}{128-255}   & \multicolumn{1}{c|}{113.5}         &         \multicolumn{1}{c|}{$1.40\times 10^{-38}$}            & 0.82                     & 0.63          & 0.63             &              &              &              & -3.94            &              &              &              &               &              &              \\
\multicolumn{1}{c|}{}                                             & \multicolumn{1}{c|}{256-511}   & \multicolumn{1}{c|}{100.4}         &         \multicolumn{1}{c|}{$5.48\times 10^{-78}$}            & 0.76                     & 0.62          & 0.62             &              &              &              & -6.77            &              &              &              &               &              &              \\
\multicolumn{1}{c|}{}                                             & \multicolumn{1}{c|}{512-1013}  & \multicolumn{1}{c|}{86.1}         &         \multicolumn{1}{c|}{$2.02\times 10^{-155}$}            & 0.73                     & 0.61          & 0.61             &              &              &              & -4.78            &              &              &              &               &              &              \\
\multicolumn{1}{c|}{}                                             & \multicolumn{1}{c|}{1024-2047} & \multicolumn{1}{c|}{81.7}       &           \multicolumn{1}{c|}{$3.78\times 10^{-302}$}            & 0.71                     & 0.59          & 0.59             &              &              &              & -17.31           &              &              &              &               &              &              \\\midrule
\multicolumn{1}{c|}{\multirow{5}{*}{\textit{\textbf{DROP}}}}      & \multicolumn{1}{c|}{32-63}     & \multicolumn{1}{c|}{77.1}         &        \multicolumn{1}{c|}{$1.97\times 10^{-14}$}             & 0.58                     & 0.29          & 0.29             & 0.53         &              &              & -0.07            & 0.17         &              &              & N/A           &              &              \\
\multicolumn{1}{c|}{}                                             & \multicolumn{1}{c|}{64-127}    & \multicolumn{1}{c|}{56.3}         &          \multicolumn{1}{c|}{$2.59\times 10^{-24}$}           & 0.51                     & 0.30          & 0.30             &              &              &              & 0.02             &              &              &              &               &              &              \\
\multicolumn{1}{c|}{}                                             & \multicolumn{1}{c|}{128-255}   & \multicolumn{1}{c|}{58.7}         &        \multicolumn{1}{c|}{$2.31\times 10^{-38}$}             & 0.67                     & 0.43          & 0.43             &              &              &              & 0.14             &              &              &              &               &              &              \\
\multicolumn{1}{c|}{}                                             & \multicolumn{1}{c|}{256-511}   & \multicolumn{1}{c|}{56.7}        &        \multicolumn{1}{c|}{$1.29\times 10^{-76}$}              & 0.48                     & 0.43          & 0.44             &              &              &              & -5.29            &              &              &              &               &              &              \\
\multicolumn{1}{c|}{}                                             & \multicolumn{1}{c|}{512-1023}  & \multicolumn{1}{c|}{36.3}          &       \multicolumn{1}{c|}{$9.63\times 10^{-155}$}             & 0.55                     & 0.46          & 0.48             &              &              &              & -0.23            &              &              &              &               &              &              \\\midrule
\multicolumn{1}{c|}{\multirow{4}{*}{\textit{\textbf{HotpotQA}}}}  & \multicolumn{1}{c|}{8-15}      & \multicolumn{1}{c|}{108.3}       &         \multicolumn{1}{c|}{$1.10\times 10^{-2}$}              & 0.87                     & 0.38          & 0.38             & 0.63         & 0.77         & 0.84         & -1.09            & -19.14       & -2.33        & 0.73         & N/A           & N/A          & N/A          \\
\multicolumn{1}{c|}{}                                             & \multicolumn{1}{c|}{16-31}     & \multicolumn{1}{c|}{96.4}      &          \multicolumn{1}{c|}{$3.30\times 10^{-4}$}              & 0.67                     & 0.39          & 0.49             & 0.66         & 0.72         &              & 0.23             & -2.28        & 0.40         &              & N/A           & N/A          &              \\
\multicolumn{1}{c|}{}                                             & \multicolumn{1}{c|}{32-63}     & \multicolumn{1}{c|}{79.4}        &        \multicolumn{1}{c|}{$5.86\times 10^{-9}$}              & 0.57                     & 0.44          & 0.45             & 0.59         &              &              & 0.13             & 0.32         &              &              & N/A           &              &              \\
\multicolumn{1}{c|}{}                                             & \multicolumn{1}{c|}{64-127}    & \multicolumn{1}{c|}{73.0}        &         \multicolumn{1}{c|}{$1.02\times 10^{-18}$}             & 0.63                     & 0.53          & 0.53             &              &              &              & -0.31            &              &              &              &               &              &       \\\bottomrule      
\end{tabular}
}
\caption{Faithfulness across all baseline methods for fixed $b=8$. The average recovered sparsity and the average ratio between sparsity and total possible interactions is also reported.}
\label{tab:faith}
\end{table*}

\subsection{Additional Results}
\subsubsection{Faithfulness}\label{apdx:faithfulness}
\paragraph{Faithfulness for a fixed sparsity parameter $b=8$:}
We first measure the faithfulness by scaling the number of samples logarithmically with $n$. The exact number of samples used can be found in Table~\ref{tab:numSamples}.

In Table~\ref{tab:faith}, we showcase the average faithfulness of every runnable method across every group of examples for the \emph{Sentiment}, \emph{DROP}, and \emph{HotpotQA} datasets. Among marginal attribution methods, LIME and Banzhaf achieve the best faithfulness. SHAP's faithfulness worsens as $n$ grows, though this is unsurprising, as Shapley values are only efficient (intended to sum to the unmasked output), not faithful. 

Comparing to interaction methods, \SpecExp{} is comparable to the highest order Faith-Banzhaf that can feasible be run at every size of $n$. However, due to poor computation complexity scaling of this, and other interaction methods, these methods are only able to be used for small $n$. In particular, we found Shapley-Taylor difficult to run for the \emph{DROP} and \emph{HotpotQA} tasks, unable to finish within thirty minutes.

We also report the average sparsity and the average sparsity ratio (sparsity over total interactions) discovered by SPEX for each of the groups. For \emph{Sentiment}, once reaching the $n\in[128-255]$ group, the average sparsity is found to be less than the number of inputs! Yet, SPEX is still able to achieve high faithfulness and significantly outperform linear methods like LIME and Banzhaf.

\begin{figure}[H]
    \centering
    \begin{subfigure}{0.8\linewidth}
        \centering
        \includegraphics[width=\linewidth]{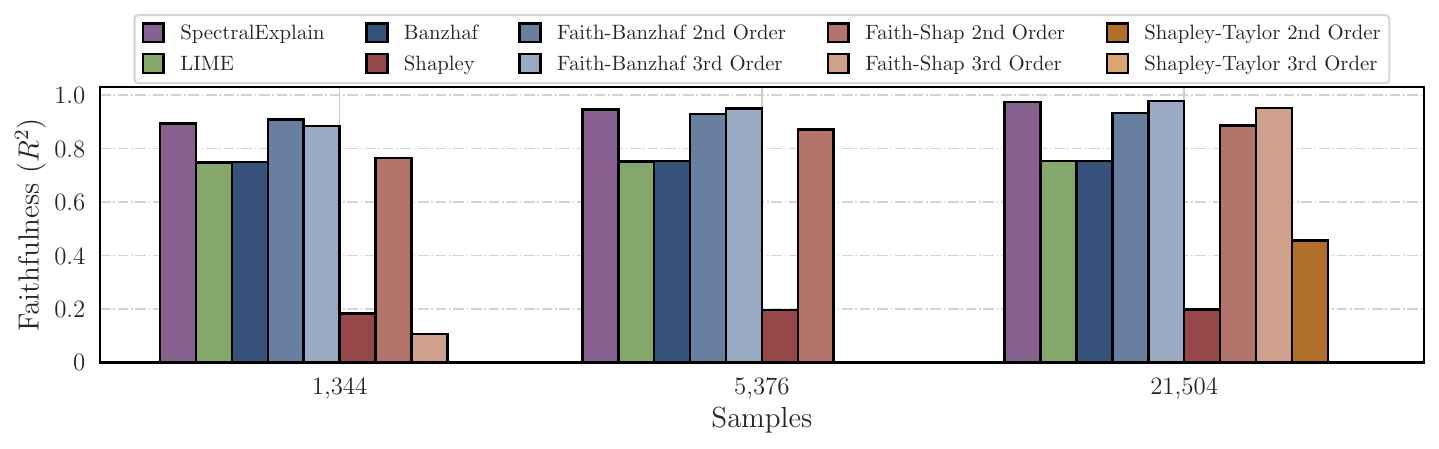}
        \vspace{-.6cm}
        \caption{\emph{Sentiment} $n\in[16,31]$}
    \end{subfigure}
    
    \begin{subfigure}{0.8\linewidth}
        \centering
        \includegraphics[width=\linewidth]{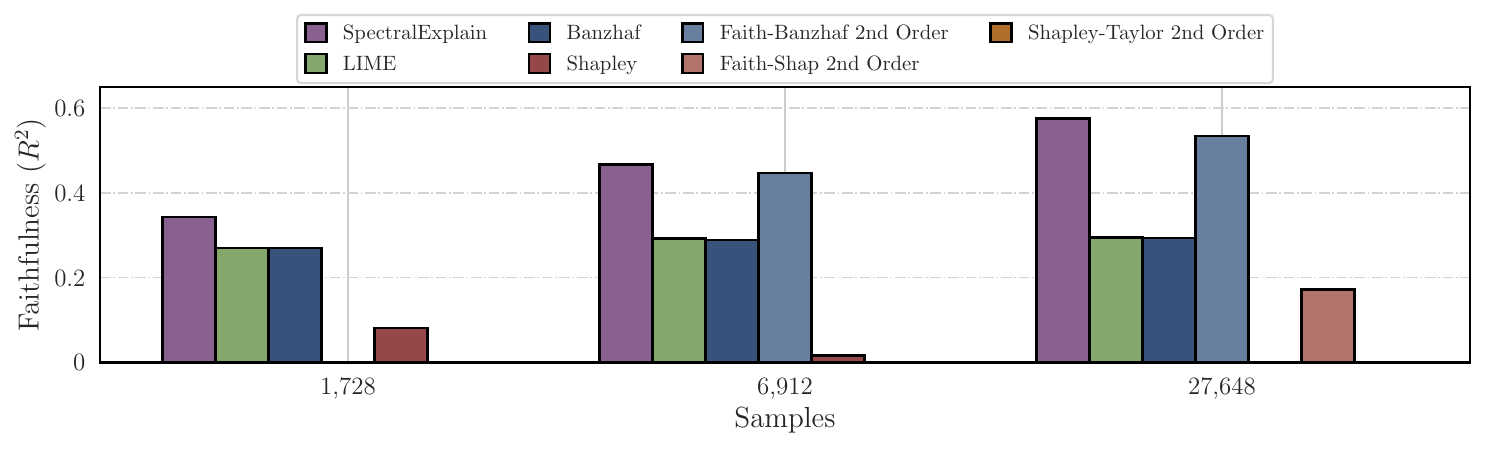}
        \vspace{-.6cm}
        \caption{\emph{DROP} $n\in[32,63]$}
    \end{subfigure}
    
    \begin{subfigure}{0.8\linewidth}
        \centering
        \includegraphics[width=\linewidth]{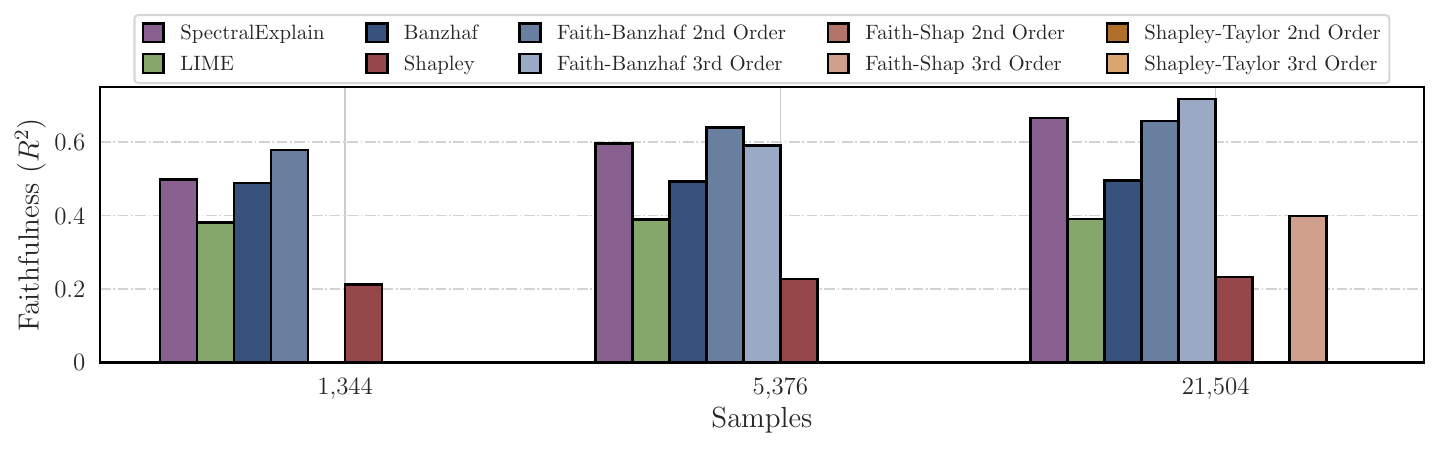}
        \vspace{-.6cm}
        \caption{\emph{HotpotQA $n\in[16,31]$}}
    \end{subfigure}
    
    \caption{Faithfulness across the three datasets for all methods that can be feasibly ran. Methods appearing in the legend, but not in the plot had a faithfulness below 0 for the given number of samples.}
    \label{fig:faithfulnessDyn}
\end{figure}
\paragraph{Faithfulness for varying the sparsity parameter $b$:}

$b=8$ may not be the sparsity parameter that achieves the best trade-off between samples and faithfulness. For instance, with complex generative models, the cost or time per instance may necessitate taking fewer samples. 

In Fig.~\ref{fig:faithfulnessDyn}, we showcase the faithfulness results for \SpecExp{} and all baseline methods when $b$ is $4,6,8$. Since the samples taken by the algorithm grows with $2^b$, $b=8$ takes 16 times more samples than $b=4$. Even in the low-sample regime, \SpecExp{} achieves high faithfulness, often surpassing linear models and second order models. At this scale, we find that third and fourth order models often do not have enough samples to provide a good fit. 

\subsubsection{Abstract Reasoning}\label{apdx:abstract}
We also evaluated the performance of \texttt{Llama 3.2 3B-Instruct} \cite{grattafiori2024llama3herdmodels} on the modified trolley problem. As a reminder, the modified problem is presented below:

\begin{quote}
\leftskip=.001in \rightskip=.001in
A runaway trolley is heading \textbf{away} from five people who are tied to the track and cannot move. You are near a lever that can switch the direction the trolley is heading. Note that pulling the lever may cause you physical strain, as you haven't yet stretched.

\textbf{\textcolor[HTML]{455935}{ True} or \textcolor[HTML]{6d3336}{False}: You should not pull the lever.}
\end{quote}

Across 1,000 evaluations, the model achieves an accuracy of just 11.8\%. Despite a similar accuracy to \texttt{GPT-4o mini}, the SHAP and SPEX-computed interactions indicate that the two models are lead astray by different parts of the problem. 

\begin{figure}[h]
    \centering
    \begin{minipage}{0.8\textwidth}
    \begin{subfigure}[b]{0.45\textwidth}
        \centering
        \includegraphics[width=\textwidth]{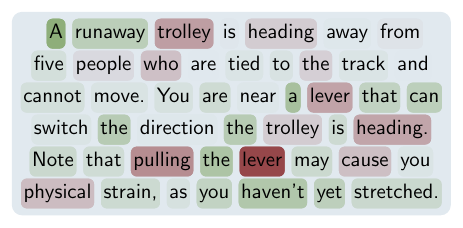}
        \caption{SHAP values}
        \label{fig:sub1}
    \end{subfigure}
    \hfill
    \begin{subfigure}[b]{0.45\textwidth}
        \centering
        \includegraphics[width=\textwidth]{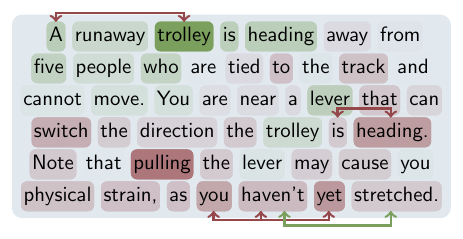}
        \caption{Interactions computed via \SpecExp{}}
        \label{fig:sub2}
    \end{subfigure}
    \caption{SHAP and SPEX-computed interactions computed for  \texttt{Llama 3.2 3B-Instruct}'s answering of the modified trolley problem. Words and interactions highlighted in green contribute positively to producing the correct output, while those in red lead the model toward an incorrect response.}
    \label{fig:main}
    \end{minipage}
\end{figure}

The most negative SHAP values of \texttt{Llama 3.2 3B-Instruct} appear for later terms such as \emph{pulling} and \emph{lever}, with surrounding words having positive SHAP values. The SPEX-computed interactions tell a different story; many of the words in the last sentence have a negative first order value, with a significant third order interaction between \emph{you haven't yet}. Furthermore, the first word \emph{A} possesses a strong negative second order interaction with \emph{trolley}. Although counterintuitive---since the fact about stretching should only enhance the likelihood of a correct answer---removing the non-critical final sentence unexpectedly boosts the model's accuracy to 20.8\%, a 9\% improvement.

\section{Relationships between Fourier and Interaction Concepts} \label{apdx:fourier-interactions}
\label{app:interactions}
\textbf{Fourier to M\"obius Coefficients:}
The M\"obius Coefficients, also referred to as the \emph{Harsanyi dividends}, can be recovered through \cite{saminger-platz_bases_2016}:
\begin{equation}
I^M(S) = (-2)^{|S|}\sum_{T\supseteq S}F(T).
\label{eq:mobius}
\end{equation}
\textbf{Fourier to Banzhaf Interaction Indices:} Banzhaf Interactions Indices \cite{roubens1996interaction} have a close relationship to Fourier coefficients. As shown in \cite{grabisch2000equivalent}:
\begin{equation}
    I^{BII}(S) = \sum_{T \supseteq S} \frac{2^{|S|}}{2^{|T|}}I^{M}(T).
\end{equation}
Using the relationship from Eq.~\ref{eq:mobius},
\begin{align}
    I^{BII}(S) &= \sum_{T \supseteq S} \frac{2^{|S|}}{2^{|T|}}(-2)^{|T|}\sum_{R \supseteq T}F(R) \\
    &= 2^{|S|}
    \sum_{\substack{T \supseteq S}} 
    (-1)^{|T|} \sum_{R \supseteq T}F(R)\\
    &= 2^{|S|}
    \sum_{\substack{R \supseteq S}} F(R)\sum_{S \subseteq T \subseteq R}
    (-1)^{|T|},\\
    &= (-2)^{|S|}F(S)
\end{align}
where the last line follows due to $\sum_{S \subseteq T \subseteq R}
    (-1)^{|T|}$ evaluating to 0 unless $R = S$.

When $S$ is a singleton, we recover the relationship between Fourier Coefficients and the Banzhaf Value $BV(i)$:
\begin{equation}
BV(i) = I^{BII}(\{i\}) = -2F(\{i\}).
\end{equation}

\textbf{Fourier to Shapley Interaction Indices:}
Shapley Interaction Indices \cite{GRABISCH1997167} are a generalization of Shapley values to interactions. Using the following relationship to M\"obius Coefficients \cite{grabisch2000equivalent}:
\begin{align}
    I^{SII}(S) &= \sum_{T \supseteq S} \frac{I^M(S)}{|T|-|S| + 1}\\
    &= \sum_{T \supseteq S} \sum_{R \supseteq T}\frac{(-2)^{|T|}F(R)}{|T|-|S| + 1}\\
    &= \sum_{R \supseteq S} F(R)\sum_{S \subseteq T \subseteq R}\frac{(-2)^{|T|}}{|T|-|S| + 1}\\
    &= \sum_{R \supseteq S} F(R)\sum_{j = |S|}^{|R|}\frac{(-2)^{j}}{j-|S| + 1 }\binom{|R| - |S|}{j - |S|}\\
    &= \sum_{R \supseteq S} F(R)\sum_{k=0}^{|R|-|S|}\frac{(-2)^{k+|S|}}{k + 1}\binom{|R| - |S|}{k}\\
    &= (-2)^{|S|}\sum_{R \supseteq S} F(R)\sum_{k=0}^{|R|-|S|}\frac{(-2)^{k}}{k + 1}\binom{|R| - |S|}{k}
\end{align}
Consider the following integral and an application of the binomial theorem:
\begin{align}
    \int_0^t (1+x)^{|R|-|S|}dx &=\int_0^t \sum_{k=0}^{|R|-|S|} \binom{|R| - |S|}{k} x^k  dx\\
    &= \sum_{k=0}^{|R|-|S|} \binom{|R| - |S|}{k} \int_0^t x^k  dx \\
    &= \sum_{k=0}^{|R|-|S|} \binom{|R| - |S|}{k} \left(\frac{t^{k+1}}{k+1}\right)
\end{align}
Evaluating at $t=-2$:
\begin{align}
\sum_{k=0}^{|R|-|S|} \binom{|R| - |S|}{k} \left(\frac{(-2)^{k}}{k+1}\right) &= -\frac{1}{2} \int_0^{-2} (1+x)^{|R|-|S|}dx\\
&= -\frac{1}{2}\cdot \frac{(-1)^{|R|-|S|+1}-1}{|R|-|S|+1}\\
&= \begin{cases}
			\frac{1}{|R|-|S|+1}, & \text{if Parity($|R|$) = Parity($|S|$) }\\
            0, & \text{otherwise}
		 \end{cases}
\end{align}
As a result, we find the relationship between Shapley Interaction Indices and Fourier Coefficients:
\begin{equation}
I^{SII}(S) = (-2)^{|S|}\sum_{\substack{R \supseteq S, \\ (-1)^{|R|} = (-1)^{|S|}}} \frac{F(R)}{|R|-|S|+1}.
\end{equation}

When $S$ is a singleton, we recover the relationship between Fourier Coefficients and the Shapley Value $SV(i)$:
\begin{equation}
SV(i) = I^{SII}(\{i\}) = (-2)\sum_{\substack{R \supseteq \{i\}, \\ |R| \text{ is odd}}} \frac{F(R)}{|R|}.
\end{equation}

\textbf{Fourier to Faith-Banzhaf Interaction Indices:} Faith-Banzhaf Interaction Indices \cite{tsai2023faith} of up to degree $\ell$ are the unique minimizer to the following regression objective: 
\begin{equation}
   \sum_{S \subseteq [n]}  \left(f(S) - \sum_{T \subseteq S, |T| \leq \ell} I^{FBII}(T,\ell) \right)^2. 
\end{equation}

Let $g(S)$ be the XOR polynomial up to degree $\ell$ that minimizes the regression objective. Appealing to Parseval's identity, 
\begin{equation}
   \sum_{S \subseteq [n]}  \left(f(S) - g(S)\right)^2 =    \sum_{S \subseteq [n]}  \left(F(S) - G(S)\right)^2 = \sum_{S\subseteq [n],|S| \leq \ell } \left(F(S) - G(S)\right)^2 + \sum_{S\subseteq [n],|S| > \ell } F(S)^2, 
\end{equation}
which is minimized when $G(S) = F(S)$ for $|S| \leq \ell$. Using Eq.~\ref{eq:mobius}, it can be seen that the Faith-Banzhaf Interaction Indices correspond to the M\"obius Coefficients of the function $f(S)$ truncated up to degree $\ell$:
\begin{equation}
       I^{FBII}(S,\ell) = (-2)^{|S|}\sum_{T\supseteq S, |T| \leq \ell}F(T).
\end{equation}

\textbf{Fourier to Faith-Shapley Interaction Indices:} Faith-Shapley Interaction Indices \cite{tsai2023faith} of up to degree $\ell$ have the following relationship to M\"obius Coefficients: 
\begin{align}
   I^{FSII}(S,\ell) &= I^M(S) + (-1)^{\ell - |S|} \frac{|S|}{\ell+|S|}\binom{\ell}{|S|}\sum_{T\supset S, |T|>\ell}\frac{\binom{|T|-1}{\ell}}{\binom{|T|+\ell -1}{\ell + |S|}} I^M(T) \\
   &= (-2)^{|S|}\sum_{T\supseteq S}F(T) + (-1)^{\ell - |S|} \frac{|S|}{\ell+|S|}\binom{\ell}{|S|}\sum_{T\supset S, |T|>\ell}\frac{\binom{|T|-1}{\ell}}{\binom{|T|+\ell -1}{\ell + |S|}} (-2)^{|T|}\sum_{R\supseteq T}F(R) \\
   &= (-2)^{|S|}\sum_{T\supseteq S}F(T) + (-1)^{\ell - |S|} \frac{|S|}{\ell+|S|}\binom{\ell}{|S|}\sum_{R\supset S, |R| > \ell }F(R) \sum_{S \subset T\subseteq R, |T|>\ell}\frac{\binom{|T|-1}{\ell}}{\binom{|T|+\ell -1}{\ell + |S|}} (-2)^{|T|}.
\end{align}

\textbf{Fourier to Shapley-Taylor Interaction Indices:}
Shapley-Taylor Interactions Indices \cite{dhamdhere2019shapley} of up to degree $\ell$ are related to M\"obius Coefficients in the following way:
\begin{equation}
    I^{STII}(S,\ell) = \begin{cases}
I^M(S), \quad \textnormal{if } |S| < \ell\\
\sum_{T \supseteq S} \binom{|T|}{\ell}^{-1}I^M(T), \quad \textnormal{if } |S| = \ell.
\end{cases}
\end{equation}

From an application of Eq.~\ref{eq:mobius},

\begin{equation}
    I^{STII}(S,\ell) = \begin{cases}
(-2)^{|S|}\sum_{T\supseteq S}F(T), \quad \textnormal{if } |S| < \ell\\
\sum_{T \supseteq S} \binom{|T|}{\ell}^{-1}(-2)^{|T|}\sum_{R\supseteq T}F(R), \quad \textnormal{if } |S| = \ell.
\end{cases}
\end{equation}
Simplifying the sum in the $|S|=\ell$ case:
\begin{align}
    \sum_{T \supseteq S} \binom{|T|}{\ell}^{-1}(-2)^{|T|}\sum_{R\supseteq T}F(R) &= \sum_{R \supseteq S}F(R)\sum_{S\subseteq T\subseteq R}\binom{|T|}{\ell}^{-1}(-2)^{|T|}\\
    &= \sum_{R \supseteq S}F(R)\sum_{k = \ell}^{|R|}\binom{k}{\ell}^{-1}(-2)^{k}\binom{|R|-\ell}{k-\ell}\\
\end{align}
Hence, 
\begin{equation}
    I^{STII}(S,\ell) = \begin{cases}
(-2)^{|S|}\sum_{T\supseteq S}F(T), \quad \textnormal{if } |S| < \ell\\
\sum_{T \supseteq S}F(T)\sum_{k = \ell}^{|T|}\binom{k}{\ell}^{-1}(-2)^{k}\binom{|T|-\ell}{k-\ell}, \quad \textnormal{if } |S| = \ell.
\end{cases}
\end{equation}

\end{document}